\definecolor{mydarkblue}{rgb}{0.03,0.2,0.4}
\newtheorem{theorem}{Theorem}
\newtheorem{assumption}{Assumption}
\newtheorem{definition}{Definition}
\newtheorem{proposition}{Proposition}
\DeclareMathAlphabet{\mathbsf}{OT1}{cmss}{bx}{n}
\DeclareMathAlphabet{\mathssf}{OT1}{cmss}{m}{sl}
\title{Weakly Supervised Representation Learning with Sparse Perturbations}
\author{%
Kartik Ahuja, Jason Hartford \& Yoshua Bengio\footnote{CIFAR Senior Fellow and CIFAR AI Chair}  \\
Mila - Quebec AI Institute, Universit\'e de Montr\'eal\\
Quebec, Canada \\
\texttt{\{kartik.ahuja,jason.hartford,yoshua.bengio\}@mila.quebec}
}
\date{}
\begin{document}
\maketitle

\begin{abstract}

The theory of representation learning aims to build methods that provably invert the data generating process with minimal domain knowledge or any source of supervision. Most prior approaches require strong distributional assumptions on the latent variables and weak supervision (auxiliary information such as timestamps) to provide provable identification guarantees. In this work, we show that if one has weak supervision from observations generated by sparse perturbations of the latent variables--e.g. images in a reinforcement learning environment where actions move individual sprites--identification is achievable under unknown continuous latent distributions. We show that if the perturbations are applied only on mutually exclusive blocks of latents, we identify the latents up to those blocks. We also show that if these perturbation blocks overlap, we identify latents up to the smallest blocks shared across perturbations. Consequently, if there are blocks that intersect in one latent variable only, then such latents are identified up to permutation and scaling. We propose a natural estimation procedure based on this theory and illustrate it on low-dimensional synthetic and image-based experiments.

\end{abstract}

\section{Introduction}

If you are reading this paper on a computer, press one of the arrow keys... all the text you are reading jumps as the screen refreshes in response to your action. Now imagine you were playing a video game like Atari's Space Invaders---the same keystroke would cause a small sprite at the bottom of your screen to move in response. These actions induce changes in pixels that are very different, but in both cases, the visual feedback in response to our actions indicates the presence of some object on the screen---a virtual paper and a virtual spacecraft, respectively---with properties that we can manipulate. Our keystrokes induce sparse changes to a program's state, and these changes are reflected on the screen, albeit not necessarily in a correspondingly sparse way (e.g., most pixels change when scrolling). Similarly, many of our interactions with the real world induce sparse changes to the underlying causal factors of our environment: lift a coffee cup and the cup moves, but not the rest of the objects on your desk; turn your head laterally, and the coordinates of all the objects in the room shift, but only in the horizontal direction. These examples hint at the main question we aim to answer in this paper: if we know that actions have sparse effects on the latent factors of our system, can we use that knowledge as weak supervision to help disentangle these latent factors from pixel-level data?

Self-- and weakly-supervised learning approaches have made phenomenal progress in the last few years, with large-scale systems like GPT-3 \citep{brown2020language} offering large improvements on all natural language benchmarks, and CLIP \citep{radford2021} outperforming state-of-the-art supervised models from six years ago \citep{Szegedy16} on the ImageNet challenge \citep{deng2009imagenet} without using any of the labels.

Yet, despite these advances, these systems are still far from human reasoning abilities and often fail on out-of-distribution examples \citep{Geirhos2020}. To robustly generalize out of distribution, 
we need models that can infer the causal mechanisms that relate latent variables \citep{scholkopf2021toward, scholkopf2022statistical} because these mechanisms are invariant under distribution shift.
The field of causal inference has developed theory and methods to infer causal mechanisms from data \citep{pearl2009causality, peters2017elements}, but these methods assume access to high-level abstract features, instead of low-level signal data such as video, text and images. We need representation learning methods that reliably recover these abstract features if we are to bridge the gap between causal inference and deep learning.

\begin{figure}
	\centering
	\includegraphics[width=\textwidth]{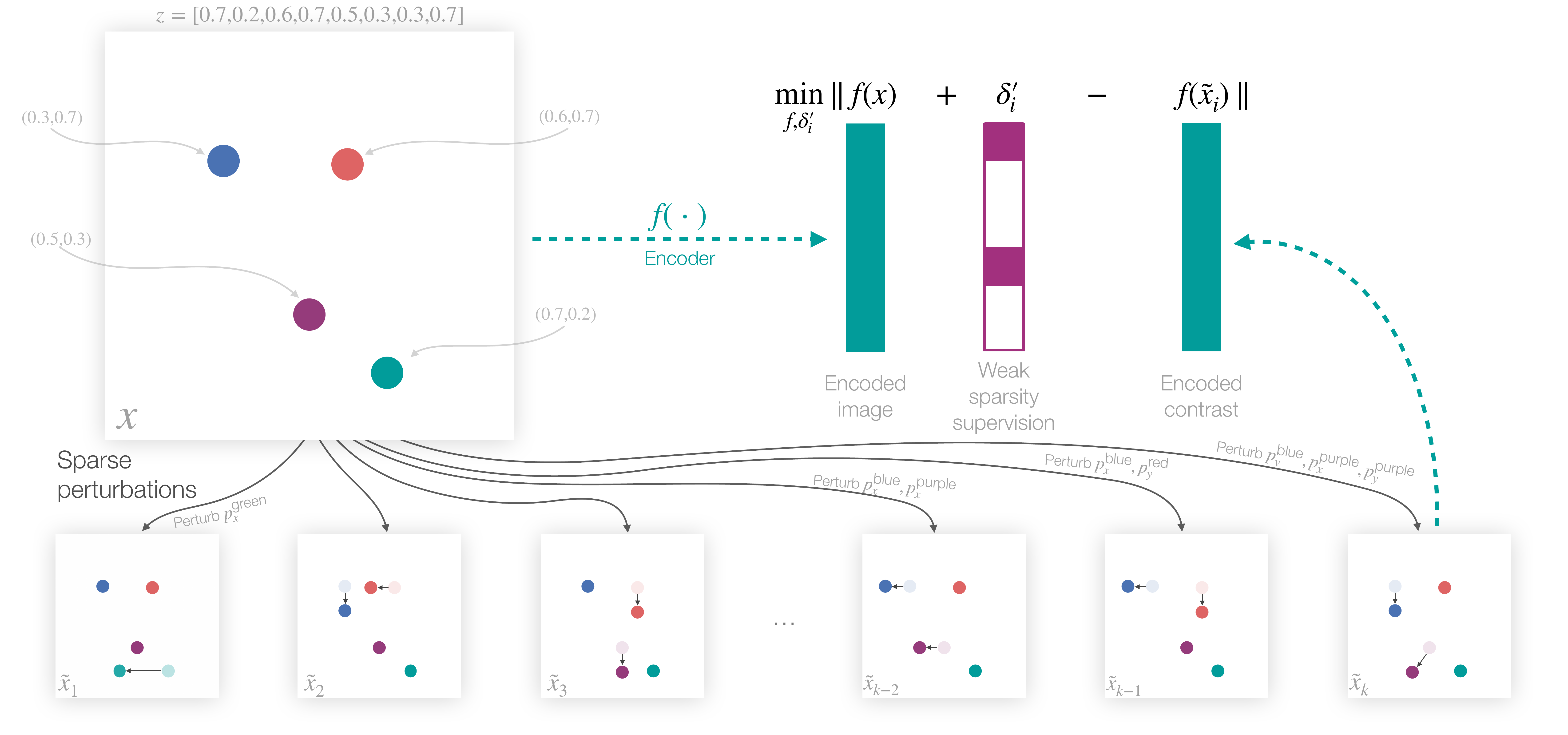}
	\caption{Ball agent interaction environment. Different frames show the effect of sparse perturbations.}
	\label{fig: ball_env}
\end{figure}

This is a challenging task because the problem of inferring latent variables is not identified with independent and identically distributed (IID) data \citep{hyvarinen1999nonlinear, locatello2019challenging}, even in the limit of an infinite number of such IID examples. 
However, there has been significant recent progress in developing representation learning approaches that provably recover latent factors $Z$ (e.g., object positions, object colors, etc.) underlying complex data $X$ (e.g. image), where $X \leftarrow g(Z)$, by going beyond the IID setting and using observations of $X$ along with minimal domain knowledge and supervision  \citep{hyvarinen2016unsupervised, hyvarinen2017nonlinear,locatello2020weakly, khemakhem2020variational}. These works establish provable identification of latents by leveraging strong structural assumptions such as independence conditional on auxiliary information (e.g., timestamps). In this work, we aim to relax these distributional assumptions on the latent variables to achieve identification for arbitrary continuous latent distributions. Instead of distributional assumptions, we assume access to data generated under sparse perturbations that change only a few latent variables at a time as a source of weak supervision. Figure \ref{fig: ball_env} illustrates our working example of this assumption: a simple environment where an agent's actions perturb the coordinates of a few balls at a time. Our main contributions are summarized as follows.

\begin{itemize}
	\item We show that sparse perturbations that impact one latent at a time are sufficient to learn the latents (up to permutation and scaling) that follow any unknown continuous distribution.   
	\item Next, we consider more general settings, where perturbations  impact one block of latent variables at a time. In the setting where blocks do not overlap, we recover the latents up to an affine transformation of these blocks. 
	\item Further, we show that when perturbation blocks overlap, we get stronger identification. In this setting, we prove identification up to affine transformation of the smallest intersecting block. Consequently, if there are blocks that intersect in one latent variable only, then such latents are identified up to permutation and scaling. 
	\item We leverage these results to propose a natural estimation procedure and experimentally illustrate the theoretical claims on low-dimensional synthetic and high-dimensional  image-based data. 
\end{itemize}

\section{Related works}

Many of the works on provable identification of representations trace their roots to non-linear ICA \citep{hyvarinen1999nonlinear}. \cite{hyvarinen2016unsupervised,hyvarinen2017nonlinear} were the first to use auxiliary information in the form of timestamps and additional structure on the latent evolution to achieve provable identification. Since then, these works have been generalized in many exciting ways. \citet{khemakhem2020variational} assume independence of latents conditional on auxiliary information, and several of these assumptions were further relaxed by \citet{khemakhem2020ice}.

Our work builds on the machinery developed we developed in 
\citet{ahuja2022properties}. There we showed that if we know the mechanisms that drive the evolution of latents, then the latents are identified up to equivariances of these mechanisms. However, we left the question of achieving exact identification without such knowledge open. Here we consider a class of mechanisms where an agent's actions impact the latents through unknown perturbations. We show how to achieve identification by exploiting the sparsity in the perturbations. This class of perturbations was first leveraged to prove identification by \citet{locatello2020weakly}. 
However, \citeauthor{locatello2020weakly} assume that the latents are independent, whereas we make no assumptions on the distribution other than continuity. Our work also connects to an insightful line of work on multi-view ICA \citep{gresele2020incomplete}. \citeauthor{gresele2020incomplete} assume independence of latents and prove identification under multiple views of the same latent through multiple decoders.

\citet{klindt2021towards} and \citet{lachapelle2022disentanglement} exploit different forms of sparsity in time-series settings to attain identification. Both works require assumptions on the parametric form of the latents (e.g., Laplacian, conditional exponential), auxiliary information observed (e.g., actions, timestamp), and the structure of the graphical model dictating the interactions between the latents and auxiliary information to arrive at identification.   \citet{yao2021learning} and \citet{lippe2022citris} model the latent evolution as a structural causal model unrolled in time. \citeauthor{yao2021learning} exploit non-stationarity and sufficient variability dictated by the auxiliary information to provide identification guarantees. \citeauthor{lippe2022citris} exploit causal interventions on the latents to provide identification guarantees but require the knowledge of intervention targets and assume an invariant causal model describing the relations between any adjacent time frames. In concurrent work, \citet{brehmer2022weakly} leverage data generated under causal interventions as a source of weak supervision and prove identification for structural causal models that are diffeomorphic transforms of exogenous noise.  In addition to the above, there are a number of recent papers that explain the success of self-supervised contrastive learning through the lens of identification of representations.  \citet{zimmermann2021contrastive} showed that encoders minimizing contrastive losses identify the latents generated from distributions such as the von Mises-Fisher distribution. 
\citet{von2021self} depart from the distributional assumptions made by \citet{zimmermann2021contrastive} and show that data augmentations filter out ``nuisance'' from the semantically relevant content to achieve blockwise identification.

\section{Latent identification under sparse perturbations}
\label{sec:theory}

\paragraph{Data Generation Process}
We start by describing the data generation process used for the rest of the work. 
There are two classes of variables we consider -- a) unobserved latent variables $Z \in \mathcal{Z} \subseteq \mathbb{R}^{d}$ and b) observed variables $X \in \mathcal{X} \subseteq \mathbb{R}^{n}$.  The latent variables $Z$ are sampled from a distribution $\mathbb{P}_Z$ and then transformed by a map $g:\mathbb{R}^{d} \rightarrow \mathbb{R}^{n}$, where $g$ is injective and analytic\footnote{A \emph{analytic} function, $g$, is an infinitely differentiable function such that for all $z'$ in its domain, the Taylor series evaluated at $z'$ converges pointwise to $g(z')$}, to generate $X$. We write this as follows
\begin{equation}
z \sim \mathbb{P}_Z  \qquad x \leftarrow g(z)
\label{eqn1}
\end{equation}
where $z$ and $x$ are realizations of the random variables $Z$ and $X$ respectively. It is impossible to invert $g$ just from the realizations of $X$ \citep{hyvarinen1999nonlinear, locatello2019challenging}. Most  work has gone into understanding how structure of latents $Z$ and auxiliary information (e.g., timestamps, weak labels) play a role in solving the above problem. In this work, we depart from these assumptions and instead investigate the role of data generated under perturbations of latents to achieve identification.  Define the set of perturbations as $\mathcal{I} = \{1, \cdots, m\}$ and the corresponding perturbation vectors as $\Delta = \{\delta_1, \cdots, \delta_m\}$, where $\delta_i$ is the $i^{th}$ perturbation. Each latent $z$ is sampled from an arbitrary and unknown distribution $\mathbb{P}_Z$. The {\em same set of unknown perturbations} in $\Delta$ are applied to each $z$ to generate $m$  perturbed latents $\{ \tilde{z}_k \}_{k=1}^{m}$ per sampled $z$ and the corresponding observed vectors $\{\tilde{x}_k\}_{k=1}^{m}$.   Each of these latents are transformed by the map $g$ and we observe $(x, \tilde{x}_1, \cdots, \tilde{x}_m)$. Our goal is to use these observations and estimate the underlying latents. We summarize this data generation process (DGP) in the following assumption.

\begin{assumption}
	\label{assum: dgp}
	The DGP follows
	\begin{equation}
	z \sim \mathbb{P}_Z, x\leftarrow g(z) \qquad    \tilde{z}_k \leftarrow z + \delta_k, \forall k \in \mathcal{I} \qquad\tilde{x}_k \leftarrow g(\tilde{z}_k),  \forall k \in \mathcal{I}
	\label{eqn_perturb_obs}
	\end{equation}
	where $g$ is injective and analytic, and $Z$ is a continuous random vector with full support over $\mathbb{R}^{d}$. \footnote{The assumption on the support of $Z$ can be relaxed.}
\end{assumption}


To better understand the above DGP, let us turn to some examples. Consider a setting where an agent is interacting with an environment containing several balls (See Figure \ref{fig: ball_env}). The latent $z$ captures the properties of the objects; for example, in Figure \ref{fig: ball_env}, $z$ just captures the positions of each ball, but in general it could include more properties such as velocity, shape, color, etc.. The agent perturbs the objects in the scene by $\delta_k$, which can modify a single property associated with one object or multiple properties from one or more objects depending on how the agent acts. Note that when the agent perturbs a latent, it can lead to downstream effects. For instance, if the agent moves a ball to the edge of the table, the ball falls in subsequent frames. For this work, we only consider the observations just before and after the perturbation and not the downstream effects. In the Appendix, we explain these downstream effects using structural causal models (See Section 7.2). We also explain the connection between the perturbations in equation \eqref{eqn_perturb_obs} (based on \cite{locatello2020weakly}) and causal interventions.  The above example is typical of a reinforcement learning environment, other examples include natural videos with sparse changes (e.g., MPI3D data \citep{gondal2019transfer}).

In the above DGP in equation \eqref{eqn_perturb_obs}, we assumed that for each scene $x$ there are multiple perturbations. It is possible to extend our results to settings where we perturb each scene only once, given a sufficiently diverse set of perturbations, i.e., for a small neighborhood of a scene around $x$, each scene in the neighbourhood receives a different perturbation. We compare these two approaches experimentally.

\paragraph{Learning objective} 
The learner's objective is to use the observed samples $(x, \tilde{x}_1, \cdots, \tilde{x}_m)$ generated by the DGP in Assumption \ref{assum: dgp} and learn an encoder $f:\mathbb{R}^{n} \rightarrow \mathbb{R}^{d}$ that inverts the function $g$ and recovers the true latents.  For each observed sample $(x, \tilde{x}_1, \cdots, \tilde{x}_m)$, the learner compares all the pairs $(x, \tilde{x}_k)$ pre- and post-perturbation.  For every unknown perturbation $\delta_k$ used in the DGP in equation \eqref{eqn_perturb_obs}, the learner guesses the perturbation $\delta_k^{'} $ and enforces that the latents predicted by the encoder for $x$ and $\tilde{x}_k$ are consistent with the guess. We write this as $\forall\, (x, \tilde{x}_1, \cdots, \tilde{x}_m) $  generated by DGP in \eqref{eqn_perturb_obs} 
\begin{equation}
f(\tilde{x}_k)= f(x)  + \delta_k^{'}.
\label{eqn:intv_identity}
\end{equation}
We denote the set of guessed perturbations as $\Delta^{'} = \{\delta_1^{'}, \cdots, \delta_m^{'}\}$, where $\delta_i^{'}$ is the guess for perturbation $\delta_i$.   We can turn the above identity into a mean square error loss given as 

\begin{equation}
\min_{f, \Delta^{'}} \mathbb{E} \Big[ \Big\| f(\tilde{x}_k) - f(x)  - \delta_k^{'}\Big\|^2 \Big]
\label{eqn:mse}
\end{equation}

where the expectation is taken over observed samples generated by the DGP in \eqref{eqn_perturb_obs} and the minimization is over all the possible maps $f$ and perturbation guesses in the set $\Delta^{'}$.  Note that a trivial solution to the above problem is an encoder that maps everything to zero, and all guesses equal zero. In the next section, we get rid of these trivial solutions by imposing an additional requirement that the span of the set $\Delta^{'}$ is $\mathbb{R}^{d}$. It is worth pointing out that we do not restrict the set of $f$'s to injective maps in theory and experiments.  We denote the latent estimated by the encoder for a point $x$ as $\hat{z} = f(x)$. It is related to the true latent as follows $\hat{z} = f \circ g (z) = a(z)$,  where $a$ is some function that relates true $z$ to estimated $\hat{z}$. In the next section, we show that if perturbations are diverse, then $a$ is an affine transform. Further, we show that if perturbations are sparse, then $a$ takes an even simpler form.

\subsection{Sparse perturbations}

We first show that it is possible to identify the true latents up to an affine transformation without any sparsity assumptions. Later, we leverage sparsity to strengthen identification guarantees.

\begin{assumption} \label{assm:span} The dimension of the span of the perturbations in equation $\eqref{eqn_perturb_obs}$ is $d$, i.e., $\mathsf{dim}\Big(\mathsf{span}\big(\Delta\big)\Big)  = d$. 
\end{assumption}
The above assumption implies that the perturbations are diverse. We now state a regularity condition on the function $a$.

\begin{assumption}
	\label{assum: analytic_measure}
	$a$ is an analytic function. For each component $i\in \{1,\cdots, d\}$ of $a(z)$ and each component $j\in \{1, \cdots, d\}$ of $z$, define the set $\mathcal{S}^{ij} = \{\theta \; |\; \nabla_{j} a_i(z+b) = \nabla_{j} a_i(z) + \nabla^2_{j} a_{i}(\theta) b, z\in \mathbb{R}^{d}\} $, where $b$ is  a fixed vector in $\mathbb{R}^{d}$. Each set  $\mathcal{S}^{ij}$ has a non-zero Lebesgue measure in $\mathbb{R}^{d}$. 
\end{assumption}

If we restrict the encoder $f$ to be analytic, then $a$ is analytic since $g$ is also analytic, thus satisfying the first part of the above assumption. The second part of the above assumption can be understood as follows: 
suppose we have a scalar valued function $h:\mathbb{R}\rightarrow \mathbb{R}$ that is differentiable. If we expand $h(u+v)$ around $h(u)$, by the mean value theorem we get $h(u+v) = h(u) + h'(\epsilon) v$, where $\epsilon \in [u, u+v]$. If we vary $u$ to take all the values in $\mathbb{R}$, then $\epsilon$ also varies. The above assumption states that the set of $\epsilon's$ has a non-zero Lebesgue measure.  Under the above assumptions, we show that an encoder that solves equation \eqref{eqn:intv_identity}  identifies true latents up to an affine transform, i.e., $\hat{z} = Az+c$,  where $A \in \mathbb{R}^{d\times d}$ is a matrix  and $c \in \mathbb{R}^d$ is an offset.

\begin{proposition}
	\label{theorem: additive_intervention}
	If Assumptions \ref{assum: dgp}, \ref{assm:span}, and \ref{assum: analytic_measure} hold, then the encoder that solves  equation \eqref{eqn:intv_identity} (with $\Delta^{'}$ s.t. $\mathsf{dim}\Big(\mathsf{span}\big(\Delta^{'}\big)\Big)  = d$) identifies true latents up to an invertible affine transform, i.e. $\hat{z} = A z + c$, where $A \in \mathbb{R}^{d\times d}$ is an invertible matrix  and $c \in \mathbb{R}^d$ is an offset.
\end{proposition}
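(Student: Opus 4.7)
Write $a = f \circ g$ so that the constraint \eqref{eqn:intv_identity} becomes the functional equation $a(z + \delta_k) = a(z) + \delta_k'$ for every $z \in \mathbb{R}^d$ and every $k \in \mathcal{I}$. The overall strategy is: differentiate this identity to extract information about the Hessian of $a$, use the mean value theorem together with Assumption \ref{assum: analytic_measure} to show this Hessian vanishes on a set of positive Lebesgue measure, invoke analyticity to upgrade this to vanishing everywhere, and finally turn vanishing Hessian into affineness.

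\textbf{Step 1 (first differentiation).} Take $\nabla_j$ on both sides of $a(z+\delta_k) = a(z) + \delta_k'$. Since $\delta_k'$ is a constant, we obtain $\nabla_j a(z+\delta_k) = \nabla_j a(z)$ for every $z$, every coordinate $j$, and every $k$. Working componentwise, $\nabla_j a_i(z+\delta_k) = \nabla_j a_i(z)$ for every $i$.

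\textbf{Step 2 (mean value theorem and measure argument).} Apply the multivariate mean value theorem to the scalar analytic function $\nabla_j a_i$, with increment $b = \delta_k$. For each $z$ there exists some $\theta$ on the segment $[z, z+\delta_k]$ with
\begin{equation*}
\nabla_j a_i(z + \delta_k) = \nabla_j a_i(z) + \big(\nabla \nabla_j a_i(\theta)\big)^{\top} \delta_k.
\end{equation*}
Combined with Step 1, this forces $\big(\nabla \nabla_j a_i(\theta)\big)^{\top} \delta_k = 0$ for every $\theta$ that arises this way as $z$ ranges over $\mathbb{R}^d$. By Assumption \ref{assum: analytic_measure} (with $b = \delta_k$), the collection $\mathcal{S}^{ij}$ of such $\theta$ has positive Lebesgue measure. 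The function $\theta \mapsto \big(\nabla \nabla_j a_i(\theta)\big)^{\top} \delta_k$ is analytic (as $a$ is analytic, by the hypothesis in Assumption \ref{assum: analytic_measure}), and an analytic function vanishing on a set of positive Lebesgue measure vanishes identically. Hence $\big(\nabla \nabla_j a_i(\theta)\big)^{\top} \delta_k = 0$ for every $\theta \in \mathbb{R}^d$, every coordinate $j$, every component $i$, and every $k$.

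\textbf{Step 3 (from span to vanishing Hessian to affineness).} Fix $i, j, \theta$. The vector $\nabla \nabla_j a_i(\theta) \in \mathbb{R}^d$ is orthogonal to every $\delta_k$; by Assumption \ref{assm:span} the $\delta_k$'s span $\mathbb{R}^d$, so $\nabla \nabla_j a_i(\theta) = 0$. Ranging $i, j, \theta$, the full Hessian of each $a_i$ vanishes on $\mathbb{R}^d$, so every $a_i$ is affine and thus $a(z) = Az + c$ for some matrix $A \in \mathbb{R}^{d \times d}$ and offset $c \in \mathbb{R}^d$. Plugging back into $a(z+\delta_k) - a(z) = \delta_k'$ yields $A \delta_k = \delta_k'$ for every $k$. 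Since $\mathsf{span}(\Delta) = \mathbb{R}^d$ by Assumption \ref{assm:span} and $\mathsf{span}(\Delta') = \mathbb{R}^d$ by hypothesis, the linear map $A$ sends a spanning set to a spanning set, hence its image is all of $\mathbb{R}^d$ and $A$ is invertible.

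\textbf{Expected obstacle.} Steps 1 and 3 are essentially bookkeeping; the crux is Step 2, where Assumption \ref{assum: analytic_measure} is used to convert a pointwise MVT identity (which a priori only constrains the Hessian on the line segments between $z$ and $z + \delta_k$) into a constraint on a set of positive measure, so that analyticity of $a$ can promote it to a global identity. The subtlety is that the mean value theorem selects $\theta$ non-uniquely as a function of $z$, so the assumption is really a hypothesis that \emph{some} measurable selection sweeps out positive measure in $\mathbb{R}^d$; once this is granted, the analytic zero-set argument does the heavy lifting.
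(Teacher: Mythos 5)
Your proposal is correct and takes essentially the same route as the paper's own proof: differentiate the identity $a(z+\delta_k)=a(z)+\delta_k'$, apply the mean value theorem componentwise to the partials $\nabla_j a_i$, use Assumption \ref{assum: analytic_measure} together with the fact that an analytic function vanishing on a set of positive Lebesgue measure vanishes identically \citep{mityagin2015zero} to kill the Hessian, and then use the spanning condition on $\Delta$ for affineness and the spanning condition on $\Delta'$ (via $A\Delta=\Delta'$) for invertibility of $A$. The only cosmetic difference is that you conclude invertibility directly from surjectivity of $A$ on a spanning set, whereas the paper phrases the same fact as a rank contradiction.
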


The proof of above proposition follows the proof technique from \cite{ahuja2022properties}, for further details refer to the Appendix (Section 7.1). We interpret the above result in the context of the agent interacting with balls (as shown in Figure \ref{fig: ball_env}), where the latent vector $z$ captures the $x$ and $y$ coordinates of the $n_{\mathsf{balls}}$. Under each perturbation, the balls move along the vector dictated by the perturbation. If there are at least $2n_{\mathsf{balls}}$  perturbations, then the latents estimated by the learned encoder are guaranteed to be an affine transformation of the actual positions of the balls. 

\subsubsection{Non-overlapping perturbations}

In Proposition \ref{theorem: additive_intervention}, we showed affine identification guarantees for the DGP from Assumption \ref{assum: dgp}.  We now explore identification when perturbations are one-sparse, i.e., one latent changes at a time.

\begin{assumption}
	\label{assm: sparse1}
	The perturbations in $\Delta$ are one-sparse, i.e., each $\delta_i \in \Delta$ has one non-zero component. 
\end{assumption}

Next, we show that under one-sparse perturbations, the latents estimated identify true latents up to permutation and scaling.

\begin{theorem}
	\label{theorem: indiv_level_intervention}
	If Assumptions \ref{assum: dgp}-\ref{assm: sparse1} hold and the number of perturbations per example equals the latent dimension, $m=d$, \footnote{We can relax this condition to $m\geq d$, refer to the Appendix (Section 7.2) for details.} then the encoder that solves  equation \eqref{eqn:intv_identity} (with $\Delta^{'}$ as one-sparse and $\mathsf{dim}\Big(\mathsf{span}\big(\Delta^{'}\big)\Big)  = d$) identifies true latents up to permutation and scaling, i.e. $\hat{z} = \Pi \Lambda z + c$, where $\Lambda \in \mathbb{R}^{d\times d}$ is an invertible diagonal matrix, $\Pi \in \mathbb{R}^{d\times d}$ is a permutation matrix and $c$ is an offset.
\end{theorem}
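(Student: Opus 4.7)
The plan is to bootstrap from Proposition~\ref{theorem: additive_intervention} and then exploit the one-sparsity constraint to pin down the structure of the affine matrix. Since Assumptions~\ref{assum: dgp}, \ref{assm:span}, \ref{assum: analytic_measure} are all in force (Assumption~\ref{assm: sparse1} is stronger than needed for span: if we have $d$ one-sparse perturbations spanning $\mathbb{R}^d$, Assumption~\ref{assm:span} is automatic), Proposition~\ref{theorem: additive_intervention} gives us an invertible affine relation $\hat z = Az+c$ for some $A \in \mathbb{R}^{d\times d}$ and $c \in \mathbb{R}^d$. This reduces the whole theorem to an algebraic statement about $A$, which is where the sparsity will do its work.

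Next I would translate the consistency constraint \eqref{eqn:intv_identity} into a condition on $A$. Writing $f(\tilde{x}_k) = A(z+\delta_k)+c$ and $f(x)+\delta_k' = Az+c+\delta_k'$ and subtracting yields
\begin{equation}
A\delta_k = \delta_k', \qquad \forall k \in \mathcal{I}.
\end{equation}
Now Assumption~\ref{assm: sparse1} says each $\delta_k$ is one-sparse; writing $\delta_k = \alpha_k e_{j_k}$ with $\alpha_k \ne 0$, the left-hand side is $\alpha_k$ times the $j_k$-th column of $A$. By hypothesis each $\delta_k'$ is also one-sparse, so each column $A_{\cdot\, j_k}$ of $A$ that gets ``activated'' by some $\delta_k$ is one-sparse.

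To conclude I would argue that \emph{every} column of $A$ is one-sparse. With $m=d$ one-sparse perturbations whose span is $\mathbb{R}^d$, the coordinates $\{j_1,\dots,j_d\}$ must exhaust $\{1,\dots,d\}$: any collision would give at most $d-1$ linearly independent vectors, contradicting the full-span hypothesis. Hence every column of $A$ has exactly one nonzero entry. Combined with the invertibility of $A$ (from Proposition~\ref{theorem: additive_intervention}), the nonzero entries must sit in distinct rows --- otherwise $A$ would have a zero row --- so $A = \Pi\Lambda$ for some permutation matrix $\Pi$ and invertible diagonal $\Lambda$, giving $\hat z = \Pi\Lambda z + c$ as claimed.

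The main obstacle is conceptual rather than technical: one has to be careful that the one-sparsity of the \emph{guessed} $\delta_k'$ (a learner-side restriction imposed by the theorem statement) and of the \emph{true} $\delta_k$ (a data-generation assumption) combine correctly through the identity $A\delta_k=\delta_k'$. Once that identity is in hand, the combinatorics of one-sparse columns plus invertibility gives the permutation-scaling structure almost mechanically; no further analytic machinery beyond what Proposition~\ref{theorem: additive_intervention} already provides is needed. The relaxation $m\geq d$ mentioned in the footnote would proceed identically, simply allowing multiple one-sparse perturbations per coordinate, all of which produce the same one-sparse column constraint on $A$.
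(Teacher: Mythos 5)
Your proof is correct and takes essentially the same route as the paper's: both bootstrap the invertible affine relation $\hat z = Az+c$ from Proposition~\ref{theorem: additive_intervention}, reduce the consistency identity \eqref{eqn:intv_identity} to $A\delta_k = \delta_k'$, and use one-sparsity of the true and guessed perturbations together with the span condition (which forces the $d$ perturbed coordinates to be distinct) to conclude that every column of $A$ is a scaled standard basis vector sitting in a distinct row, i.e.\ $A = \Pi\Lambda$. The only cosmetic difference is that the paper makes the coordinate pairing explicit via a bijection $\pi$ with $\Delta' = \{c_1 e_{\pi(1)},\dots,c_d e_{\pi(d)}\}$, whereas you obtain the distinct-row structure from the invertibility of $A$; the two are equivalent.
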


For the proof of above theorem, refer to Section 7.1 in the Appendix. The theorem does not require that learner knows either the identity or amount each component changed. However, the learner has to use one-sparse perturbations as guesses. Suppose the learner does not know that the actual perturbations are one-sparse and instead uses guesses that are $p$-sparse, i.e., $p$ latents change at one time. In such a case, the $\hat{z}$ and true $z$ are related to each other through a permutation and block diagonal matrix, i.e., we can replace $\Lambda$ in the above result to be a block diagonal matrix instead of a diagonal matrix (see Section 7.2 in the Appendix for details). In the context of the ball agent interaction environment from Figure \ref{fig: ball_env}, the above result states that provided the agent interacts with one coordinate of each ball at a time, it is possible to learn the position of each ball up to scaling errors.  

We now consider a natural extension of the setting above, where the perturbations simultaneously operate on blocks of latents. In the ball agent interaction environment, this can lead to multiple scenarios -- i) the agent interacts with one ball at a time but perturbs both coordinates simultaneously, ii) the agent interacts with several balls simultaneously. 

Consider a perturbation $\delta_i \in \Delta$ (from equation \eqref{eqn_perturb_obs}).  We define the block of latents that is impacted under perturbation $\delta_i\in \Delta$ as $ \{j\;| \;\delta_i^{j}\not=0, j\in \{1, \cdots, d\}\}$, where $\delta_i^j$ is the $j^{th}$ component of $\delta_i$. We group the perturbations in $\mathcal{I}$ based on the block they act upon, i.e. perturbations in the same group act on the same block of latents. Define the set of the groups corresponding to perturbations in $\mathcal{I}$ as  $\mathcal{G}_{\mathcal{I}}$. Define the set of corresponding blocks as $\mathcal{B}_{\mathcal{I}} = \{\mathcal{B}_1, \cdots, \mathcal{B}_g\}$, where $\mathcal{B}_k$ is the block impacted by perturbations in group $k$.  If $\mathcal{B}_{\mathcal{I}} $ partitions the set of latent components indexed $\{1, \cdots, d\}$, then it implies all the distinct blocks are non-overlapping. We formally define this below. 


\begin{definition}
	\label{blockwise_defn}
	\textbf{Blockwise and non-overlapping perturbations.}  If the the set of blocks $\mathcal{B}_{\mathcal{I}}$ corresponding to perturbations $\mathcal{I}$ form a partition of $\{1, \cdots, d\}$, then $\mathcal{I}$ is said to be blockwise and non-overlapping. Formally stated,   any two distinct $\mathcal{B}_i, \mathcal{B}_j \in \mathcal{B}_{\mathcal{I}}$ do not intersect, i.e.,  $\mathcal{B}_i \cap \mathcal{B}_j = \emptyset$, and $\cup_i \mathcal{B}_i = \{1,\cdots, d\}$. 
\end{definition}

From the above definition it follows that two perturbations either act on the same block or completely different blocks with no overlapping variables.

\begin{assumption}
	\label{assum:block_level_intervention}
	The perturbations $\mathcal{I}$ (used in equation \eqref{eqn_perturb_obs}) are blockwise and non-overlapping (see Definition \ref{blockwise_defn}). Each  perturbation in $\mathcal{I}$ is $p$-sparse, i.e., it impacts blocks of length  $p$ ($p\leq d$) at a time.
\end{assumption}

\begin{assumption}
	\label{assum: span_agent}
	The learner knows the group label for each perturbation $i \in \mathcal{I}$.  Therefore, any two perturbations in $\Delta^{'}$ associated with same group in $\mathcal{G}_{\mathcal{I}}$ impact the same block of latents.
\end{assumption}

We illustrate the above Assumptions \ref{assum:block_level_intervention},  \ref{assum: span_agent} in the following example. Consider the ball agent interaction environment (Figure \ref{fig: ball_env}). $z = [z_{1x}, z_{1y}, \cdots, z_{n_{\mathsf{balls}}x},z_{n_{\mathsf{balls}}y}]$ is the vector of positions of all balls, where $z_{ix/y}$ is the $x/y$ coordinate of ball $i$. If the agent randomly perturbs ball $i$, then it changes the block $(z_{ix}, z_{iy})$. We would call such a system $2$-sparse. All the perturbations on ball $i$ are in one group. Since the agent knows the group of the perturbation, it does not know the ball index but it knows whenever we interact with the same ball.

\begin{definition}
	If the latent variables recovered $\hat{z}= \Pi \tilde{\Lambda}  z + c$, where $\Pi$ is a permutation matrix and $\tilde{\Lambda}$ is a block-diagonal matrix, then the latent variables are said to be recovered up to permutations and block-diagonal transforms. 
\end{definition}

In the theorem that follows, we show that under the assumptions made in this section, we achieve identification up to permutations and block-diagonal transforms with invertible $p\times p$ blocks.

\begin{theorem}
	\label{theorem: blockwise_intervention}
	If Assumptions \ref{assum: dgp}-\ref{assum: analytic_measure}, \ref{assum:block_level_intervention}, \ref{assum: span_agent} hold,  then the encoder that solves  equation \eqref{eqn:intv_identity} (where $\Delta^{'}$ is $p$-sparse, $\mathsf{dim}\Big(\mathsf{span}\big(\Delta^{'}\big)\Big)  = d$) identifies true latents up to permutation and block-diagonal transforms, i.e. $f(x)=\hat{z} = \Pi\tilde{\Lambda} z + c$, where $\tilde{\Lambda} \in \mathbb{R}^{d\times d}$ is an invertible block-diagonal matrix with blocks of size $p\times p$, $\Pi \in \mathbb{R}^{d\times d}$ is a permutation matrix and $c \in \mathbb{R}^d$ is an offset.
\end{theorem}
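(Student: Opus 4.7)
The plan is to reduce the problem to an affine identification result and then exploit the blockwise sparsity together with the grouping information to pin down the structure of the affine map.

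First, I would invoke Proposition \ref{theorem: additive_intervention} directly. The hypotheses of the proposition require Assumptions \ref{assum: dgp}, \ref{assm:span}, \ref{assum: analytic_measure} plus $\mathsf{dim}(\mathsf{span}(\Delta^{'})) = d$. All of these are either stated in the current theorem or are implied by Assumption \ref{assum:block_level_intervention} together with the fact that the blocks partition $\{1,\ldots,d\}$ (which forces $\mathsf{dim}(\mathsf{span}(\Delta)) = d$ once each group spans its own block; see the next step). So we immediately obtain $\hat{z} = f(x) = Az + c$ with $A \in \mathbb{R}^{d\times d}$ invertible.

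Next, I would substitute this affine form back into the learner's identity \eqref{eqn:intv_identity}. Since $\tilde{x}_k = g(z+\delta_k)$, the identity gives $A(z + \delta_k) + c = (Az + c) + \delta_k'$, i.e. $A \delta_k = \delta_k'$ for every $k \in \mathcal{I}$. Let $\mathcal{B}_1,\ldots,\mathcal{B}_g$ be the true blocks and, for each group $\ell$, let $V_\ell = \mathsf{span}\{e_j : j \in \mathcal{B}_\ell\}$ and $V_\ell' = \mathsf{span}\{e_j : j \in \mathcal{B}_\ell'\}$, where $\mathcal{B}_\ell'$ is the (common) support of the guessed perturbations in group $\ell$ (well defined by Assumption \ref{assum: span_agent}). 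By construction $\delta_k \in V_\ell$ and $\delta_k' \in V_\ell'$ whenever $k$ belongs to group $\ell$, so the relation $A\delta_k = \delta_k'$ sends $\mathsf{span}\{\delta_k : k \in \text{group }\ell\}$ into $V_\ell'$.

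The key structural step is to show that, within each group $\ell$, the true perturbations actually span all of $V_\ell$ (and similarly for the guesses). This follows from the total span assumption together with non-overlap: since $\mathbb{R}^d = \bigoplus_\ell V_\ell$ is a direct sum of coordinate subspaces and every $\delta_k \in V_{\ell(k)}$, the only way $\sum_\ell \mathsf{span}\{\delta_k : k \in \text{group }\ell\}$ can equal $\mathbb{R}^d$ is if each summand is all of $V_\ell$. Applying the same argument to $\Delta'$ (which the theorem requires to have full span and which is $p$-sparse with matching group structure by Assumption \ref{assum: span_agent}) we get that the guesses span $V_\ell'$. Hence $A V_\ell \subseteq V_\ell'$, and since $\dim A V_\ell = \dim V_\ell = p = \dim V_\ell'$ (using invertibility of $A$), we obtain $A V_\ell = V_\ell'$ for every group $\ell$.

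Finally, I would translate this block-preservation property into the claimed factorization. Because $A$ is an invertible linear map from $\bigoplus_\ell V_\ell$ to $\bigoplus_\ell V_\ell'$ that carries each $V_\ell$ onto a distinct coordinate subspace $V_{\pi(\ell)}'$ of the same size $p$, the induced map $\ell \mapsto \pi(\ell)$ on group labels is a bijection, and $A$ acts on each block by an invertible $p\times p$ linear map. Assembling these block actions and composing with the permutation matrix $\Pi$ that realizes $\pi$ at the coordinate level gives $A = \Pi \tilde{\Lambda}$ with $\tilde{\Lambda}$ block diagonal and each diagonal block in $\mathrm{GL}_p(\mathbb{R})$. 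Substituting into $\hat z = Az + c$ yields the desired identification $\hat z = \Pi \tilde\Lambda z + c$. The main obstacle I anticipate is the within-group span argument above, which is where the ``non-overlap'' hypothesis of Assumption \ref{assum:block_level_intervention} and the full-span condition must be combined carefully; everything else is essentially an unwinding of the affine identity delivered by Proposition \ref{theorem: additive_intervention}.
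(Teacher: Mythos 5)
Your proposal is correct and takes essentially the same route as the paper's proof: invoke Proposition \ref{theorem: additive_intervention} for affine identification $\hat z = Az+c$, reduce the learner's identity to $A\delta_k = \delta_k'$, and combine the within-group spanning (which the paper likewise deduces from Assumptions \ref{assm:span} and \ref{assum:block_level_intervention}) with the group-label knowledge of Assumption \ref{assum: span_agent} to force $A = \Pi\tilde{\Lambda}$. The only difference is presentational: you phrase the block-structure step in invariant-subspace language ($AV_\ell = V_\ell'$ with a dimension count), whereas the paper argues row-by-row on submatrices ($a_k[1:p]^{\mathsf{T}}\delta_i[1:p]=0$ for all group-1 perturbations, then a rank argument for invertibility of each $p\times p$ block), but the mathematical content is identical.
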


For the proof of the above theorem, refer to Section 7.1 in the Appendix. From the above theorem, we gather that the learner can separate the perturbed blocks. However, the latent dimensions within the block are linearly entangled. In the ball agent interaction with $2$-sparse perturbations, the above theorem implies that the agent can separate each ball out but not their respective $x$ and $y$ coordinates. In the above theorem, we require the learner to know the group of each intervention (Assumption \ref{assum: span_agent}). In Section 7.2 in the Appendix, we discuss ideas on how to relax this assumption.

\subsubsection{Overlapping perturbations}

In the previous section, we assumed that the blocks across different perturbations are non-overlapping. This section relaxes this assumption and allows the perturbation blocks to overlap. We start with a motivating example to show how overlapping perturbations can lead to stronger identification. 

Consider the agent interacting with two balls, where $z = [z_{1x}, z_{1y}, z_{2x}, z_{2y}]$ describes the coordinates of the two balls.  The agent perturbs the first ball and then perturbs the second ball. For the purpose of this example, assume that these perturbations satisfy the assumptions in Theorem \ref{theorem: blockwise_intervention}. We obtain that the estimated position of each ball $\hat{z}_{ix/y}$ is linearly entangled w.r.t the true $x$ and $y$ coordinates. For the first ball we get
$\hat{z}_{1x} = a_1 z_{1x} + a_2z_{1y} +a_3$.
We also have the agent perturb the $x$ coordinates of the first and second ball together and then it does the same with the $y$ coordinates.  We apply Theorem \ref{theorem: blockwise_intervention} and obtain that the estimated $x$ coordinates of each ball are linearly entangled. We write this as $\hat{z}_{1x} = b_1z_{1x}  + b_2z_{2x} + b_3$. We take a difference of the two relations for $\hat{z}_{1x}$ to get 
\begin{equation}
(a_1-b_1)z_{1x} + a_2z_{1y} -b_2 z_{2x}  +a_3-b_3 = 0
\end{equation}

Since the above has to hold for all $z_{1x},z_{1y}, z_{2x}$, we get $a_1=b_1$, $a_2=0$, $b_2=0$ and $a_3=b_3$. Thus $\hat{z}_{1x} = a_1z_{1x}+a_3$. Similarly, we can disentangle the rest of the balls. 

We take the insights from the above example and generalize them below. Let us suppose that from the set of perturbations $\mathcal{I}$ we can construct at least two distinct subsets $\mathcal{I}_1$ and $\mathcal{I}_2$ such that both subsets form a blockwise non-overlapping perturbation (see Definition \ref{blockwise_defn}).  Perturbations in $\mathcal{I}_1$ ($\mathcal{I}_2$) partition $\{1, \cdots, d\}$ into blocks $\mathcal{B}_{\mathcal{I}_1}$ ($\mathcal{B}_{\mathcal{I}_2}$) respectively. 
It follows that there exists at least two blocks  $\mathcal{B}^{1} \in \mathcal{B}_{\mathcal{I}_1}$ and $\mathcal{B}^{2} \in \mathcal{B}_{\mathcal{I}_2}$ such that $\mathcal{B}^{1} \cap \mathcal{B}^{2} \not= \emptyset$. From Theorem \ref{theorem: blockwise_intervention}, we know that we can identify latents in block $\mathcal{B}^{1}$ and $\mathcal{B}^{2}$ up to affine transforms. In the next theorem, we show that we can identify latents in each of the blocks $\mathcal{B}^{1} \cap \mathcal{B}^{2}$, $\mathcal{B}^{1} \setminus \mathcal{B}^{2}$, $\mathcal{B}^{2} \setminus \mathcal{B}^{1}$ up to affine transforms.

\begin{assumption}
	\label{assum1:block_level_intervention}
	Each perturbation in $\mathcal{I}$ is $p$-sparse.  The perturbations in each group span a $p$-dimensional space, i.e., $\forall q \in \mathcal{G}_{\mathcal{I}}, \; \mathsf{dim}\Big(\mathsf{span}\Big(\{\delta_i\}_{i\in q} \Big)\Big) =p$.  There exist at least two distinct subsets of perturbations $\mathcal{I}_1\subseteq \mathcal{I}$ and $\mathcal{I}_2\subseteq \mathcal{I}$ that are both blockwise and non-overlapping.
\end{assumption}

\begin{theorem}
	\label{theorem: intersecT}
	Suppose Assumptions \ref{assum: dgp}, \ref{assum: analytic_measure}, \ref{assum: span_agent} and \ref{assum1:block_level_intervention} hold. Consider the subsets $\mathcal{I}_1$ and $\mathcal{I}_2$ that satisfy Assumption \ref{assum1:block_level_intervention}. For every pair of blocks,  $\mathcal{B}^{1} \in \mathcal{B}_{\mathcal{I}_1}$ and $\mathcal{B}^{2} \in \mathcal{B}_{\mathcal{I}_2}$, the encoder that solves equation \eqref{eqn:intv_identity} (where $\Delta^{'}$ is $p$-sparse, $\mathsf{dim}\Big(\mathsf{span}\big(\Delta^{'}\big)\Big)  = d$) identifies latents in each of the blocks $\mathcal{B}^{1} \cap \mathcal{B}^{2}$, $\mathcal{B}^{1} \setminus \mathcal{B}^{2}$, $\mathcal{B}^{2} \setminus \mathcal{B}^{1}$ up to invertible affine transforms.
\end{theorem}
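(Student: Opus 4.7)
The plan is to apply Theorem~\ref{theorem: blockwise_intervention} twice—once to the sub-collection $\mathcal{I}_1$ and once to $\mathcal{I}_2$—and then intersect the two row-support constraints that result on the matrix relating $\hat{z}$ to $z$. Each sub-collection $\mathcal{I}_q$ individually meets the hypotheses of Theorem~\ref{theorem: blockwise_intervention}: it is blockwise non-overlapping by Assumption~\ref{assum1:block_level_intervention}, its groups each span a $p$-dimensional subspace whose blocks partition $\{1,\ldots,d\}$ so that $\mathsf{span}\big(\{\delta_i\}_{i\in \mathcal{I}_q}\big) = \mathbb{R}^{d}$, and the learner's group labels are inherited via Assumption~\ref{assum: span_agent}. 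Each application therefore yields $\hat{z} = A z + c$ for a common invertible $A$ (the encoder and its composition $a = f \circ g$ do not depend on which sub-collection we argue from), together with a row-partition constraint: for every $\mathcal{B}^q \in \mathcal{B}_{\mathcal{I}_q}$ there is a row index set $R_q(\mathcal{B}^q) \subseteq \{1,\ldots,d\}$ with $|R_q(\mathcal{B}^q)| = p$ such that the rows of $A$ indexed by $R_q(\mathcal{B}^q)$ vanish outside the columns $\mathcal{B}^q$, and the corresponding $p \times p$ sub-block of $A$ is invertible.

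Combining the two constraints on the same $A$, every row $i$ has support contained in $\mathcal{B}^1(i) \cap \mathcal{B}^2(i)$, where $\mathcal{B}^q(i)$ is the block of $\mathcal{B}_{\mathcal{I}_q}$ associated with row $i$. Now fix the target pair $(\mathcal{B}^1, \mathcal{B}^2)$ and write $R_q = R_q(\mathcal{B}^q)$. Rows in $R_1 \cap R_2$ have support in $\mathcal{B}^1 \cap \mathcal{B}^2$; rows in $R_1 \setminus R_2$ belong to some $R_2(\mathcal{B}'^2)$ with $\mathcal{B}'^2 \neq \mathcal{B}^2$, and because $\mathcal{B}_{\mathcal{I}_2}$ is a partition we have $\mathcal{B}'^2 \cap \mathcal{B}^2 = \emptyset$, so these rows have support inside $\mathcal{B}^1 \setminus \mathcal{B}^2$; symmetrically for $R_2 \setminus R_1$. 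After a row permutation, the invertible $p \times p$ sub-block of $A$ on $(R_1, \mathcal{B}^1)$ therefore splits as block-diagonal, with diagonal blocks indexed by $(R_1 \cap R_2, \mathcal{B}^1 \cap \mathcal{B}^2)$ and $(R_1 \setminus R_2, \mathcal{B}^1 \setminus \mathcal{B}^2)$.

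A dimension count on this block-diagonal decomposition forces the diagonal blocks to be square, so $|R_1 \cap R_2| = |\mathcal{B}^1 \cap \mathcal{B}^2|$ and $|R_1 \setminus R_2| = |\mathcal{B}^1 \setminus \mathcal{B}^2|$, and invertibility of the outer $p \times p$ block then forces each sub-block to be invertible on its own. This delivers invertible affine identification of $z$ on $\mathcal{B}^1 \cap \mathcal{B}^2$ and on $\mathcal{B}^1 \setminus \mathcal{B}^2$; the symmetric argument starting from the $p \times p$ sub-block on $(R_2, \mathcal{B}^2)$ handles $\mathcal{B}^2 \setminus \mathcal{B}^1$.

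The main obstacle is the consistency step in the middle: the two separate invocations of Theorem~\ref{theorem: blockwise_intervention} must be argued to describe the same global matrix $A$, and the two resulting row-partitions must be combined to give a block-diagonal, rather than merely block-triangular, decomposition of each relevant sub-block of $A$. This is the generalization of the two-ball motivating calculation given just before the theorem to arbitrary overlapping block structures; once that setup is in place, what remains is elementary linear algebra on square invertible matrices.
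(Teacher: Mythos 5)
Your proposal is correct and follows essentially the same route as the paper's proof: two invocations of Theorem \ref{theorem: blockwise_intervention} (one per partition $\mathcal{B}_{\mathcal{I}_1}$, $\mathcal{B}_{\mathcal{I}_2}$) constraining the same affine map $\hat{z}=Az+c$, followed by a block-diagonal split of the relevant $p\times p$ sub-block and a dimension count forcing the diagonal sub-blocks to be square and invertible. The only cosmetic difference is that the paper re-derives your row-support facts by explicitly subtracting the two affine expressions for each shared (and each unshared) component of $\hat{z}$ and using the full support of $Z$ (its equations for $\hat{z}_i$ and $\hat{z}_{\alpha_k}$), whereas you read them off directly from the permutation-times-block-diagonal conclusion of Theorem \ref{theorem: blockwise_intervention} applied to each sub-collection, which packages the same argument more compactly.
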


For the proof of the above theorem, refer to Section 7.1 in the Appendix. From the above theorem, it follows that if blocks overlap at one latent only, then all such latents are identified up to permutation and scaling. We now construct an example to show the identification of all the latents under overlapping perturbations. Suppose we have a $4$ dimensional latent. The set of all contiguous blocks of length $2$ is given as follows $\{\{1,2\}, \{2,3\}, \{3, 4\}, \{4,1\}\}$. Different $2$-sparse perturbations impact these blocks. Observe that every component between $1$ to $4$ gets to be the first element of a block exactly once and the last element of the block exactly once. As a result, each latent gets to be the only element at the intersection of two blocks. We apply Theorem \ref{theorem: intersecT} to this case and get that all the latents are identified up to permutation and scaling. We generalize this example below.



\begin{assumption}
	\label{assum:block_level2}
	$\mathcal{B}_{\mathcal{I}}$ is a set of all the contiguous blocks of length $p$, where $p<d$. The perturbations in each block span a $p$ dimensional space. Further, assume that $d\; \mathsf{mod} \;p =0$. 
\end{assumption} 

In the above assumption, we construct $d$ contiguous blocks such that a blocks of length $p$. The construction ensures that each index in $\{1, \cdots, d\}$ forms the first element of exactly one block and last element of exactly one block.   In the next theorem, we show that under the above assumption,  we achieve identification up to permutation and scaling. 

\begin{theorem}
	\label{cor:block_level_intervention}
	Suppose Assumptions \ref{assum: dgp}, \ref{assum: analytic_measure}, \ref{assum: span_agent} and \ref{assum:block_level2} hold, then the encoder that solves the identity in equation \eqref{eqn:intv_identity} (where $\Delta^{'}$ is $p$-sparse, $\mathsf{dim}\Big(\mathsf{span}\big(\Delta^{'}\big)\Big)  = d$) identifies true latents up to permutations and scaling, i.e., $\hat{z} = \Pi \Lambda z +c$, where $\Pi \in \mathbb{R}^{d\times d}$ matrix and $\Lambda \in \mathbb{R}^{d\times d}$ is a diagonal matrix.
\end{theorem}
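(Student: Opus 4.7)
The plan is to reduce Theorem \ref{cor:block_level_intervention} to repeated applications of Theorem \ref{theorem: intersecT} by constructing $p$ natural blockwise non-overlapping partitions, one per ``rotation'' of the contiguous-block pattern. Since $d \bmod p = 0$ and $\mathcal{B}_{\mathcal{I}}$ contains every contiguous block of length $p$, for each offset $j \in \{0, 1, \dots, p-1\}$ I would form the subset of perturbations
\begin{equation}
\mathcal{I}_j \;\leftrightarrow\; \bigl\{ \{j{+}1, \dots, j{+}p\},\; \{j{+}p{+}1, \dots, j{+}2p\},\; \dots \bigr\} \pmod{d}.
\end{equation}
Each $\mathcal{I}_j$ is blockwise and non-overlapping in the sense of Definition \ref{blockwise_defn}, and (taking $p \ge 2$, since $p=1$ is the setting of Theorem \ref{theorem: indiv_level_intervention}) at least two such partitions exist, so Assumption \ref{assum1:block_level_intervention} is satisfied. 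First I would apply Proposition \ref{theorem: additive_intervention}: because the perturbations in each block span $p$ dimensions and the blocks cover $\{1, \dots, d\}$, the combined $\Delta$ spans $\mathbb{R}^{d}$, and hence $\hat{z} = A z + c$ for some invertible $A \in \mathbb{R}^{d \times d}$.

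The crucial combinatorial step is to notice that every index $i \in \{1, \dots, d\}$ appears as a singleton intersection of blocks from two different partitions. Specifically, the block $\mathcal{B}^{\mathrm{start}}_i := \{i, i{+}1, \dots, i{+}p{-}1\}$ lies in $\mathcal{I}_{(i-1)\bmod p}$, while $\mathcal{B}^{\mathrm{end}}_i := \{i{-}p{+}1, \dots, i\}$ lies in $\mathcal{I}_{i \bmod p}$, and these two partitions are distinct because $p \ge 2$. Their intersection is exactly the singleton $\{i\}$. Applying Theorem \ref{theorem: intersecT} to the pair of partitions $(\mathcal{I}_{(i-1)\bmod p}, \mathcal{I}_{i \bmod p})$ and the pair of blocks $(\mathcal{B}^{\mathrm{start}}_i, \mathcal{B}^{\mathrm{end}}_i)$ yields an invertible affine identification of the latents in the intersection; since the intersection is one-dimensional, this means there exists an index $\sigma(i)$ and scalars $\lambda_i \neq 0$, $\gamma_i$ with $\hat{z}_{\sigma(i)} = \lambda_i z_i + \gamma_i$.

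Finally, I would combine these scalar identifications with Proposition \ref{theorem: additive_intervention}. For each $i$, the $\sigma(i)$-th row of $A$ has its $i$-th entry equal to $\lambda_i$ and, because $\hat{z}_{\sigma(i)}$ depends on $z_i$ alone, all its other entries equal zero. Invertibility of $A$ forces $\sigma$ to be a bijection (hence a permutation), and so $A = \Pi \Lambda$ with $\Pi$ a permutation matrix and $\Lambda = \mathrm{diag}(\lambda_1, \dots, \lambda_d)$ invertible diagonal, giving $\hat{z} = \Pi \Lambda z + c$ as claimed. The main subtlety I anticipate is cleanly arguing that Theorem \ref{theorem: intersecT}'s ``identification up to invertible affine transform on the intersection'' can be harmonized across the different pairs $(\mathcal{I}_j, \mathcal{I}_{j'})$ into a single global permutation $\sigma$; this is handled by using the globally affine form $\hat{z} = A z + c$ from Proposition \ref{theorem: additive_intervention} as the common reference frame, so the per-$i$ singleton constraints simply read off individual entries of $A$.
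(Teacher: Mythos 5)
Your proof is correct and follows essentially the same route as the paper's: you construct rotations of the contiguous-block partition so that each index $i$ is the singleton intersection of the block in which it appears first and the block in which it appears last, and then invoke Theorem \ref{theorem: intersecT} componentwise. Your final assembly step---harmonizing the per-component scalar identifications through the global affine form $\hat{z}=Az+c$ from Proposition \ref{theorem: additive_intervention} to force $A=\Pi\Lambda$---merely makes explicit a bookkeeping detail the paper's terser proof leaves implicit.
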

\begin{table}
	\captionsetup{width=.75\textwidth}
	\caption{Comparing MCC and BMCC for non-overlapping perturbations. The number of perturbations applied for each example is given in parenthesis}
	\label{table_synth}
	\renewcommand{\arraystretch}{1.2}
	\centering
	\begin{tabular}{llll|ll }
		\toprule
		$d$  & $p_Z$   & MCC     & MCC                   & BMCC    & BMCC        \\ 
		&         & C-wise ($d$)  & C-wise  ($1$)            & B-wise ($d$) & B-wise ($1$)\\ \hline 
		$6$  & Normal  & $0.99 \pm 0.00$ &  {\color{gray}$0.99 \pm 0.00$}   & $0.99 \pm 0.00$ &{\color{gray}$0.99 \pm 0.01$}\\
		$10$ & Normal  & $0.99 \pm 0.00$ & {\color{gray}$0.99 \pm 0.01$}    & $0.99 \pm 0.00$    &{\color{gray}$0.91 \pm 0.02$}  \\
		$20$ & Normal  & $0.99 \pm 0.00$ & {\color{gray}$0.88 \pm 0.03$}      & $0.99\pm 0.00$ &{\color{gray}$0.90 \pm 0.01$}\\ \hline
		$6$  & Uniform & $0.99 \pm 0.00$ &  {\color{gray}$0.99 \pm 0.00$}    &$0.99 \pm 0.00 $  &{\color{gray}$0.96 \pm 0.04$}\\
		$10$ & Uniform & $0.99 \pm 0.00$ &  {\color{gray}$0.99 \pm 0.01$}  & $0.99 \pm 0.00 $  & {\color{gray}$0.81 \pm 0.05$} \\ 
		$20$ & Uniform & $0.99 \pm 0.00$ &  {\color{gray}$0.82 \pm 0.02$}    & $0.85 \pm 0.08$ & {\color{gray}$0.51 \pm 0.04$} \\  
		\bottomrule
	\end{tabular}
\end{table}

For the proof of the above theorem, refer to Section 7.1 in the Appendix. The total number of perturbations required in the above theorem is $p\times d$. If we plug $p=1$, we recover Theorem \ref{theorem: indiv_level_intervention} as a special case. The above result highlights that if the block lengths are larger, then we need to scale the number of perturbations accordingly by the same factor to achieve identification up to permutation and scaling. We assumed a special class of perturbations operating on contiguous blocks. In general, the total number of distinct blocks can be up to $d \choose p$. Suppose $s$ distinct random blocks of length $p$ are selected for perturbations. As $s$ grows, we reach a point where each latent component is at the intersection of two blocks from different sets of blockwise non-overlapping perturbations. At that point, we identify all latents up to permutation and scaling.

\section{Experiments}
\label{sec: expmts}


\paragraph{Data generation processes}  We conducted two sets of experiments -- low-dimensional synthetic and high-dimensional image-based inputs -- that follow the DGP in equation \eqref{eqn_perturb_obs}. In the low-dimensional synthetic experiments we experimented with two choices for $\mathbb{P}_Z$ a) uniform distribution with independent latents, b) normal distribution with latents that are blockwise independent (with block length $d/2$). We used an invertible multi-layer perceptron (MLP) (with $2$ hidden layers) from \cite{zimmermann2021contrastive} for $g$. We evaluated for latent dimensions $d\in\{6,10,20\}$. The training and test data size was $10000$ and  $5000$ respectively. For the image-based experiments we used PyGame \citep{pygame}'s rendering engine for $g$ and generated $64\times 64$ pixel images that look like those shown in Figure \ref{fig: ball_env}. The coordinates of each ball, $z_i$, were drawn independently from a uniform distribution, $z_i \sim \mathcal{U}(0.1, 0.9)$. We varied the number of balls from $2$ ($d=4$) to 4 ($d=8$). For these experiments, there was no fixed-size training set; instead the images are generated online and we trained to convergence. Because these problems are high dimensional, we only sampled a single perturbation for each image.

\begin{table}
	\begin{minipage}{0.6\linewidth}
		\centering
		\includegraphics[width=3in]{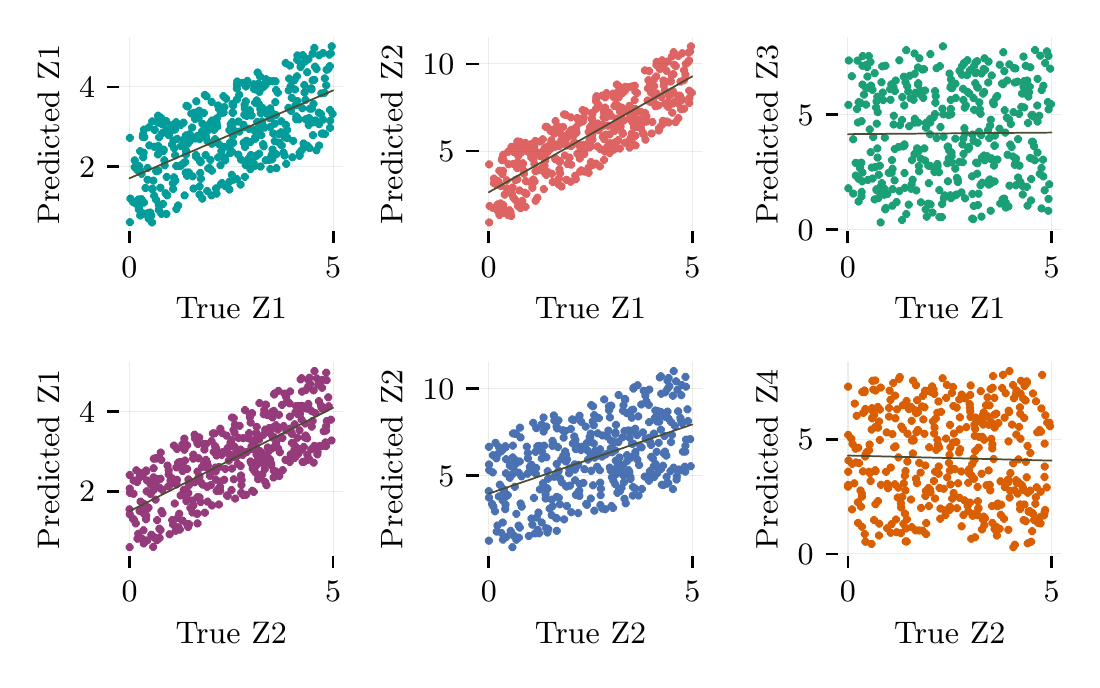}
		\captionof{figure}{Illustrating blockwise dependence ($d=10$).}
		\label{fig: com_reg_noverlap}
	\end{minipage}
	\begin{minipage}{0.37\linewidth}
		\renewcommand{\arraystretch}{1.2}
		\caption{ MCC for B-wise (overlap).}
		\label{table_synth2}
		\centering
		\begin{tabular}{lll}
			\toprule
			$d$    & Distribution   & MCC\\ \hline 
			$6$ & Normal  & $0.95 \pm 0.01$    \\
			$10$     & Normal  & $0.96 \pm 0.01$    \\ 
			$20$   & Normal  &  $0.99\pm 0.01$\\ \hline
			$6$ & Uniform & $0.86 \pm 0.03$     \\
			$10$     & Uniform & $0.88\pm 0.03$      \\ 
			$20$    & Uniform &  $0.81\pm 0.03$ \\  
			\bottomrule
		\end{tabular}
	\end{minipage}\hfill
\end{table}

\paragraph{Loss function, architecture, evaluation metrics}
In all the experiments we optimized equation \eqref{eqn:mse} with square error loss. The encoder $f$ was an MLP with two hidden layers of size $100$ for the low-dimensional synthetic experiments and a ResNet-18 \citep{He2015} for the image-based experiments.  Further training details such as the optimizers used, hyperparameters etc. are in the Appendix (Section 7.3). We used the mean correlation coefficient (MCC) \citep{hyvarinen2016unsupervised} to verify the claims in Theorems \ref{theorem: indiv_level_intervention} and \ref{cor:block_level_intervention}. If MCC equals one, then the estimated latents identify true latents up to permutation and scaling.  We extend MCC to blockwise MCC (BMCC) to verify the claims in Theorem \ref{theorem: blockwise_intervention}. If BMCC equals one, then the estimated latents identify true latents up to permutation and block-diagonal transforms. Further details 
are in the Appendix (Section 7.3). The codes to reproduce these experiments can be found at \url{https://github.com/ahujak/WSRL}.

\begin{table}[t]
	\begin{minipage}{0.37\linewidth}
		\caption{Image experiments}
		\label{table_img}
		\centering
		\begin{tabular}{llll}
			\toprule
			$d$       & MCC & MCC& MCC\\ 
			&C-wise& B-wise   & B-wise  \\
			& & $d$  & ${d \choose p}$   \\  \hline
			$4$            & $0.994$ &$0.710$ &$0.864$ \\  
			$6$          & $0.981$ &$ 0.817$&$0.912$ \\  
			$8$         & $0.975$ & $0.866$&$0.934 $\\  
			\bottomrule
		\end{tabular}
	\end{minipage}%
	\begin{minipage}{0.62\linewidth}
		\centering
		\includegraphics[width=0.16\linewidth, trim=0 0 0 0.cm]{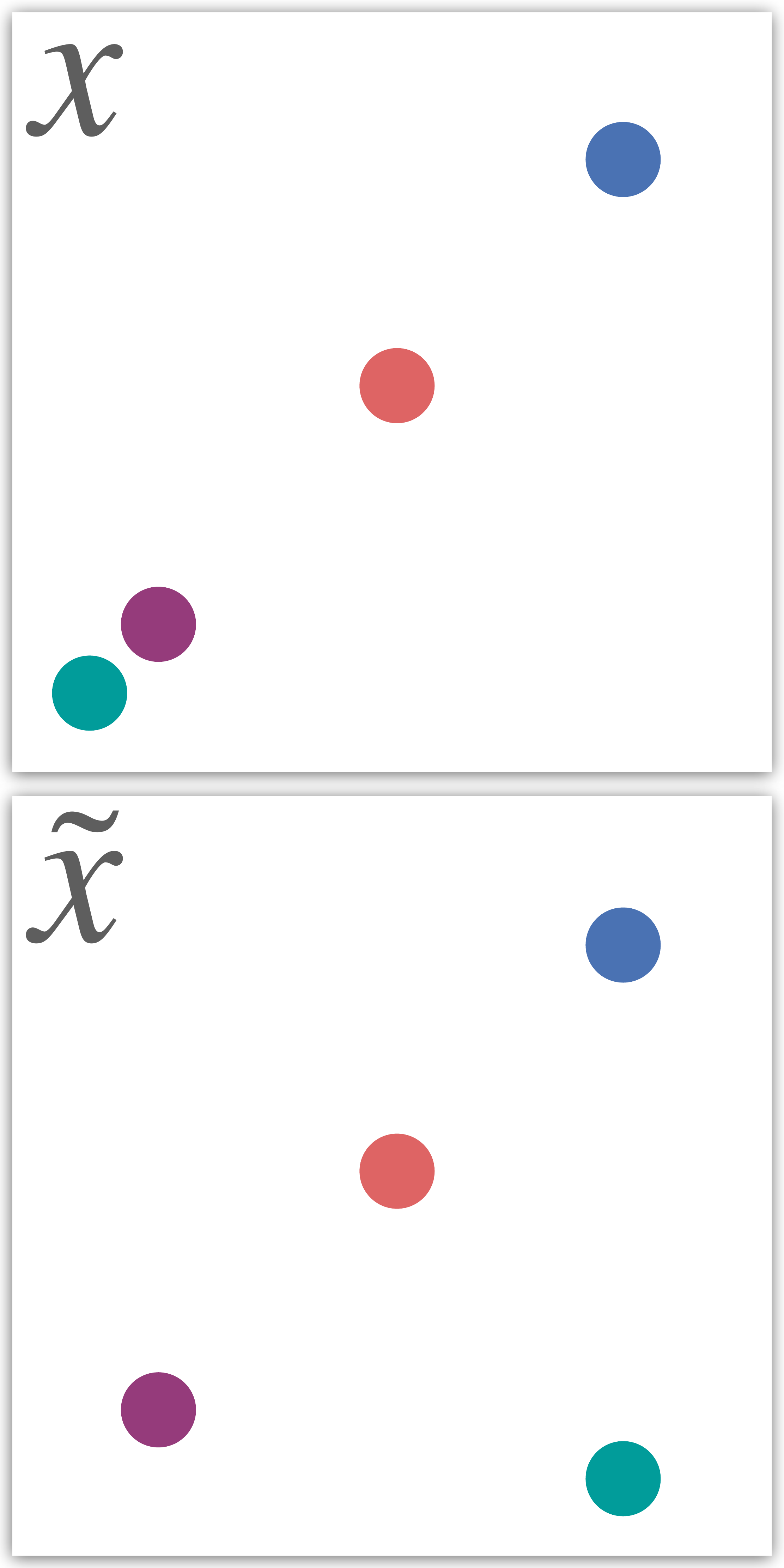}
		\vspace{0.1cm}
		\includegraphics{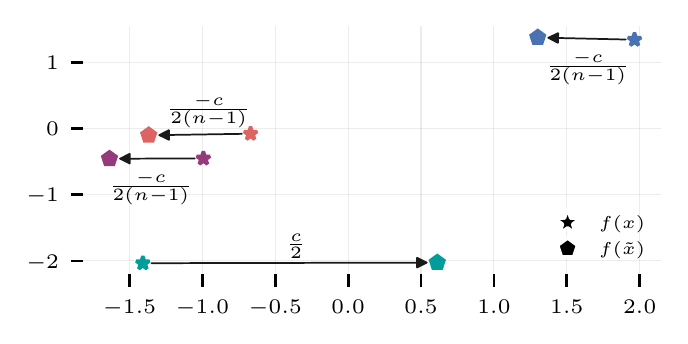}
	\end{minipage}
	\captionof{figure}{\emph{(Left)} Results for the image-base experiments. \emph{(Centre)} Example images in which the bottom left ball is shifted to the right. \emph{(Right)} A trained encoder's predictions for the two images shown in \emph{(centre)}. The green ball prediction shifts right by $\approx\frac{c}{2}$ and the other balls left by $\approx\frac{c}{2(n-1)}$. For further illustrations, refer to the animations in \url{https://github.com/ahujak/WSRL}.}
	\label{fig: images}
\end{table}

\paragraph{Non-overlapping perturbations}
\label{para: non-ovlap}
We start with results from experiments with one-sparse perturbations. The set $\Delta$ consists of $m=d$ one-sparse perturbations that span a $d$ dimensional space. In the context of the image experiments, these perturbations correspond to moving each ball individually along a single axis. The learner solves the identity in equation \eqref{eqn:intv_identity} using a set of random one-sparse perturbations $\Delta^{'}$  that span a $d$ dimensional space. 
In Table \ref{table_synth}, we used the low-dimensional synthetic data generating process to compare the effect of (i) applying all $m=d$ perturbations to each instance $z$ (following the DGP in \eqref{eqn_perturb_obs}), against a more practical setting (ii) where a perturbation is selected uniform at random from $\Delta$ and applied to each instance $z$.  The results for (i) are shown in black and the results for (ii) are shown in gray font in the {\em C-wise} (componentwise) column in Table  \ref{table_synth}. We observed high MCCs in both settings. The results were similar in the more challenging image-based experiments (see Table \ref{table_img}, C-wise column) with MCC scores $> 0.97$ for all the settings that we tested, as expected given 
the results presented in Theorem \ref{theorem: indiv_level_intervention}. 

Next, we chose the set of perturbations $\Delta$ to comprise of $d$ $2$-sparse non-overlapping perturbations that span a $d$ dimensional space. We repeated the same synthetic experiments as above with one and $d$ perturbations per instance.   Under these assumptions we should expect to see that  pairs of latents are separated blockwise but linearly entangled within the blocks (c.f. Theorem \ref{theorem: blockwise_intervention}). We found this to be the case. The high BMCC numbers in Table \ref{table_synth} displayed under {\em B-wise}  (blockwise) column (except for $d=20$ and one perturbation per sample) show disentanglement between the blocks of latents. In Figure \ref{fig: com_reg_noverlap}, the first two rows and columns show how the predicted latents corresponding to a block are correlated with their true counterpart (see Predicted $Z_i$ vs True $Z_i$) and the other latent in the block (Predicted $Z_1$ vs True $Z_2$ and vice versa). The plots in the last column show that the predicted latents did not bear a correlation with a randomly selected latent from outside the block.

\paragraph{Overlapping perturbations}
In this section, we experimented with blocks of size two that overlap in order to conform with the setting described in Theorem \ref{cor:block_level_intervention}.   
We used the same distributions as before and only changed the type of perturbations.  The low-dimensional synthetic results are summarized in Table \ref{table_synth2}. The results were largely as expected, with a strong correspondence between the predicted and true latents reflected by high MCC values.

On the image datasets (see Table \ref{table_img}), we found that the MCC scores depended on both the number of balls and how the blocks were selected. We compared two strategies for selecting blocks of latents to perturb: either select uniformly from all adjacent pairs $\mathcal{I} = \{(i \text{ mod }d,i+1\text{ mod } d)\}$ ($d$ blocks),  or uniformly from all combinations of latent indices, $\mathcal{I} = \{(i,j):i \in \{1, \dots, d\}, j>i\}$ (${d\choose 2}$ blocks). The latter lead to higher MCC scores (ranging from $0.86$ to $0.93$) as it placed more constraints on the solution space. The dependence on the number of balls is more surprising. 
To investigate the implied entanglement from the lower MCC scores, we evaluated trained encoders on images where we kept $n_{\mathsf{balls}}-1$ balls in a fixed location and moved one of the balls (see Section 7.3 in the Appendix for example images). If the coordinates were perfectly disentangled, the encoder should predict no movement for static balls. 
When the moving ball shifted by $c$ units, the predicted location of the static balls shifted by $\approx\frac{-c}{2(n_{\mathsf{balls}}-1)}$ and the moving ball shifted $\approx\frac{c}{2}$ units. We further verified this claim and ran blockwise experiments with $n_{\mathsf{balls}}=10$ balls ($d=20$) and got MCC scores of $0.930$ and $0.969$ for $d$ and ${d \choose 2}$ blocks respectively. 
In the Appendix (Section 7.3), we show that this solution is a stationary point, and we achieve a perfect MCC of one when $n_{\mathsf{balls}} = \infty$.

\section{Discussion and limitations}
\label{sec: disc_lim}

Our work presents the first systematic analysis of the role of sparsity in achieving latent identification under unknown arbitrary latent distributions. We assume that every sample (or at least every neighborhood of a sample) experiences the same set of perturbations. A natural question is how to extend our results to settings where this assumption may not hold. 
Data augmentation provides a rich source of perturbations; our results cover translations, but extending them to other forms of augmentation is an important future direction.
We followed the literature on non-linear ICA \citep{hyvarinen2019nonlinear} and made two assumptions -- i) the map $g$ that mixes latents is injective, and ii) the dimension of the latent $d$ is known. We believe future works should aim to relax these assumptions.

\newpage 

\section{Acknowledgements}

Kartik Ahuja acknowledges the support provided by the IVADO postdoctoral fellowship funding program. Jason Hartford acknowledges the support of the Natural Sciences and Engineering Research Council of Canada (NSERC).  The authors also acknowledge the funding from Recursion Pharmaceuticals and CIFAR.

\bibliographystyle{apalike}
\bibliography{CRL.bib}

\section{Appendix}
We organize the Appendix into three sections. In Section \ref{sec: proofs}, we provide the proofs to all the propositions and the theorems. In Section \ref{sec: extension}, we discuss how some of the proposed results can be extended. In Section \ref{sec : exp_supp}, we provide supplementary materials for the experiments. 

\subsection{Proofs}
\label{sec: proofs}
We restate all the propositions and the theorems below for convenience. In the proofs that follow, we use $\Delta$ ($\Delta^{'}$) to denote the set of perturbations and the matrix of perturbations interchangeably (their usage is clear from the context). We start with the proof of Proposition \ref{theorem: additive_intervention}, which follows the proof technique from \cite{ahuja2022properties}.

\begin{proposition}
	\label{theorem: additive_intervention1}
	If Assumptions \ref{assum: dgp}, \ref{assm:span}, and \ref{assum: analytic_measure} hold, then the encoder that solves  equation \eqref{eqn:intv_identity} (with $\Delta^{'}$ s.t. $\mathsf{dim}\Big(\mathsf{span}\big(\Delta^{'}\big)\Big)  = d$) identifies true latents up to an invertible affine transform, i.e. $\hat{z} = A z + c$, where $A \in \mathbb{R}^{d\times d}$ is an invertible matrix  and $c \in \mathbb{R}^d$ is an offset.
\end{proposition}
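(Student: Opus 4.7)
The plan is to reduce the encoder constraint \eqref{eqn:intv_identity} to a functional equation on $a := f \circ g$ and then show $a$ must be affine. Setting $\hat z = f(x) = a(z)$ and $f(\tilde x_k) = a(z+\delta_k)$, the constraint becomes
\[
a(z + \delta_k) - a(z) = \delta'_k \qquad \text{for all } z\in\mathbb{R}^{d},\ k\in\mathcal{I},
\]
so the increment of $a$ along each perturbation direction is independent of the base point. This is the starting point from which both the affineness and the invertibility fall out.

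Next, I would differentiate the above identity with respect to $z_j$, which is legitimate because $a$ is analytic and the right-hand side does not depend on $z$, yielding $\nabla_{j} a_i(z + \delta_k) = \nabla_{j} a_i(z)$ for every component $i$ of $a$ and every coordinate $j$ of $z$. I would then invoke Assumption \ref{assum: analytic_measure} with $b = \delta_k$ to rewrite this difference as $\nabla^{2}_{j} a_i(\theta)\cdot\delta_k = 0$ for $\theta$ ranging over the set $\mathcal{S}^{ij}$ of positive Lebesgue measure. Because $a$ is analytic, so is the scalar map $\theta \mapsto \nabla^{2}_{j} a_i(\theta)\cdot \delta_k$; by the identity theorem, an analytic function on $\mathbb{R}^{d}$ that vanishes on a set of positive Lebesgue measure vanishes identically. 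Hence $\nabla^{2}_{j} a_i(\theta)\cdot\delta_k = 0$ for every $\theta\in\mathbb{R}^{d}$ and every $k\in\mathcal{I}$.

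Now I would apply Assumption \ref{assm:span}: since the $\delta_k$'s span $\mathbb{R}^{d}$, the vector $\nabla^{2}_{j} a_i(\theta)$ annihilates a spanning set and is therefore zero. Running $i$ and $j$ over all coordinates shows that the Jacobian of $a$ is constant on $\mathbb{R}^{d}$, so $a(z) = A z + c$ for some $A\in\mathbb{R}^{d\times d}$ and $c\in\mathbb{R}^{d}$. Substituting back into the starting identity gives $A\delta_k = \delta'_k$ for every $k$, i.e., $A\Delta = \Delta'$ as matrices. Because $\mathsf{span}(\Delta') = \mathbb{R}^{d}$, the image of $A$ is all of $\mathbb{R}^{d}$, so $A$ is a surjective square matrix and hence invertible.

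The main obstacle I anticipate is the step that upgrades the finite-difference identity $\nabla_{j}a_i(z+\delta_k)=\nabla_{j}a_i(z)$ into a pointwise Hessian statement: this is exactly where Assumption \ref{assum: analytic_measure} and the identity theorem for analytic functions do the real work, and it is the only non-routine ingredient. Everything that comes before it is just manipulating the encoder constraint, and everything after it is clean linear algebra driven by the spanning hypotheses on $\Delta$ and $\Delta'$.
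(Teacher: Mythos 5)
Your proposal is correct and follows essentially the same route as the paper's proof: reduce the constraint to $a(z+\delta_k)-a(z)=\delta'_k$, differentiate, use Assumption \ref{assum: analytic_measure} to express the gradient difference via the Hessian at points ranging over a positive-measure set, invoke the fact that a real analytic function vanishing on a set of positive Lebesgue measure vanishes identically (the paper cites Mityagin for this; your ``identity theorem'' appeal is the same fact), kill the Hessian using the span of $\Delta$, and deduce invertibility of $A$ from $A\Delta=\Delta'$ and the span of $\Delta'$. The only cosmetic difference is your invertibility step (direct surjectivity of $A$ from its image containing $\Delta'$) versus the paper's rank-inequality contradiction, which are interchangeable.
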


\begin{proof}
	We simplify the identity in equation \eqref{eqn:intv_identity} as follows. 
	\begin{equation}
	\begin{split}
	f(x) + \delta_{i}^{'} = f(\tilde{x}_k) \\ 
    f\circ g(z) + \delta_{i}^{'} = f\circ g (\tilde{z}_k) \\
	a(z) + \delta_{i}^{'} = a(\tilde{z}_k) \\ 
	a(z) + \delta_{i'}^{'} = a(z + \delta_i) \\
	\end{split}
	\label{proof_eqn1}
	\end{equation}
	
	In the above simplification, we use the following observation. Since $x$ and $\tilde{x}_k$ are generated from $g$ and $g$ is injective, we can substitute  $x=g(z)$ and $\tilde{x}_k = g(\tilde{z}_k)$, where $\tilde{z}_k = z + \delta_k$. 
	
	For simplicity denote the last line in above equation \eqref{proof_eqn1} as 
	\begin{equation}
	a(z) + b^{'} = a(z + b).
	\label{proof_eqn2}
	\end{equation}
	
	We take gradient of the function in the LHS and RHS of the above equation \eqref{proof_eqn2} separately w.r.t $z$. Consider the $j^{th}$ component of $a(z+b)$ denoted as $a_{j}(z+b)$. We first take the gradient of $a_{j}(z+b)$ w.r.t $z$ below.
	
	\begin{equation}
	\nabla_z a_j(z+b) = \Big(\frac{dy}{dz}\Big)^{\mathsf{T}}\nabla_y a_j(y),
	\end{equation}
	where $y=z+b$, $\nabla_y a_j(y)$ is the gradient of $a_j$ w.r.t $y$ and  $\frac{dy}{dz}$ denotes the Jacobian of $y$ w.r.t $z$. We simplify the above further to get
	\begin{equation}
	\nabla_z a_j(z+b)  = \nabla_y a_j(y) = \nabla_{y} a_j(z+b). 
	\end{equation}
	
	We can write the above for each component of $a$ as follows.  
	
	\begin{equation}
	\begin{split}
	&  \big[\nabla_z a_1(z+b), \cdots, \nabla_z a_{d}(z+b)\big] = \big[\nabla_{y} a_1(z+b), \cdots,  \nabla_{y} a_d(z+b)\big] \\ 
	&= [\nabla_{y} a_1(z+b), \cdots,  \nabla_{y} a_d(z+b)]  = J^{\mathsf{T}}(z+b),
	\label{eqn: grad_LHS}
	\end{split}
	\end{equation}
	
	where $J(z+b)$ is the Jacobian of $a$ computed at $z+b$. 
	We equate the gradient of LHS and RHS in \eqref{proof_eqn2}  to obtain
	\begin{equation} 
	\begin{split}
	a(z+b) = a(z) +b^{'} \implies  J^{\mathsf{T}}(z+b) - J^{\mathsf{T}}(z) = 0. 
	\end{split}
	\label{theorem3_proof:eqn2_n1}
	\end{equation}
	
	Consider row $j$ of this identity. For each $z \in \mathbb{R}^{d}$
	
	\begin{equation}
	\nabla a_j(z+b) - \nabla a_j(z) =0   \implies \begin{bmatrix}\nabla^{2}_{1}a_{j}(\theta_1) \\ 
	\nabla^{2}_{2}a_{j}(\theta_2)  \\
	\vdots \\ 
	\nabla^{2}_{d}a_{j}(\theta_d) \end{bmatrix}(b) = 0.
	\label{theorem3_proof:eqn2_n3}
	\end{equation}
	where $\nabla^{2} a_j$ is the Hessian of $a_j$ and  $   \nabla^{2}_{k}a_{j}(\theta_k)$ corresponds to the $k^{th}$ row of the Hessian matrix. Note that in the above expansion there is a different $\theta_k$ for each row (mean value theorem applied to each component of $\nabla a_j$ yields a different point $\theta_k$ on the line joinining $\tilde{z}$ and $\tilde{z} + b$). From Assumption \ref{assum: analytic_measure} it follows that $\nabla^{2}_{k}a_{j}(\theta_{k})(b)= 0$ over a set with non-zero measure. Since $a_j$ is analytic $\nabla^{2}_{k}a_{j}(z)(b)$ is also analytic (each component of the vector is a weighted sum of analytic functions). Therefore,   we can conclude that  $\nabla^{2}_{k}a_{j}(z)(b) = 0$ for all $z$ (follows from \cite{mityagin2015zero}). We can make the same argument for each component $k$ and conclude that  $\nabla^{2}a_{j}(z)(b) = 0$. From the identity in equation \eqref{eqn:intv_identity}, it follows that $\nabla^{2}a_j(z)(\delta_{j}) = 0$ for all $j \in \{1, \cdots, d\}$ and since the set $\Delta=\{\delta_{1}, \cdots, \delta_{d}\}$ is linearly independent $\nabla^{2}a_j(z)=0$ for all $z$.  This implies $a(z) = Az+c$. 
	
	We substitute this in equation \eqref{proof_eqn1} to get 
	$A\Delta = \Delta^{'}$, where $\Delta$ is the matrix of true perturbations and $\Delta^{'}$ is the matrix of guessed perturbations (recall we stated above that we use $\Delta,\Delta^{'}$ as sets and matrices interchangeably). We now need to show that $A$ is invertible. Suppose $A$ was not invertible, which implies the rank of $A \leq n-1$. Following Assumption \ref{assm:span}, rank of $\Delta$ is $n$. Note that rank of $\Delta^{'}$ is also $n$. Note that if $E=FG$, where $E$, $F$, $G$ are three matrices, then $\mathsf{rank}(E) \leq \min \{\mathsf{rank}(F), \mathsf{rank}(G)\}$. Following this identity, $\mathsf{rank}(\Delta^{'}) \leq n-1$, which is a contradiction. Therefore, $A$ has to be invertible. This completes the proof. 
	
	
\end{proof}

\begin{theorem}
	\label{theorem: indiv_level_intervention1}
	If Assumptions \ref{assum: dgp}-\ref{assm: sparse1} hold and the number of perturbations per example equals the latent dimension, $m=d$, then the encoder that solves  equation \eqref{eqn:intv_identity} (with $\Delta^{'}$ as one-sparse and $\mathsf{dim}\Big(\mathsf{span}\big(\Delta^{'}\big)\Big)  = d$) identifies true latents up to permutation and scaling, i.e. $\hat{z} = \Pi \Lambda z + c$, where $\Lambda \in \mathbb{R}^{d\times d}$ is an invertible diagonal matrix, $\Pi \in \mathbb{R}^{d\times d}$ is a permutation matrix and $c$ is an offset.
\end{theorem}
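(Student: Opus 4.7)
The plan is to bootstrap directly off Proposition~\ref{theorem: additive_intervention}. Since Assumptions~\ref{assum: dgp}, \ref{assm:span}, and \ref{assum: analytic_measure} all hold (note that $d$ one-sparse vectors whose span is $\mathbb{R}^d$ automatically satisfy Assumption~\ref{assm:span}), the proposition immediately gives $a(z) = f \circ g(z) = Az + c$ for some invertible $A \in \mathbb{R}^{d \times d}$ and offset $c$. So the work that remains is purely linear-algebraic: use the additional structure that both $\Delta$ and $\Delta'$ consist of one-sparse vectors to force $A$ to be a permutation times a diagonal.

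The key step is as follows. Substitute the affine form $a(z) = Az + c$ into the perturbation identity \eqref{proof_eqn2} at $b = \delta_i$, $b' = \delta_i'$: this collapses to the matrix equation $A\delta_i = \delta_i'$ for each $i \in \mathcal{I}$. Now, because the $d$ vectors in $\Delta$ are one-sparse and their span is $\mathbb{R}^d$, there is a permutation $\sigma$ and nonzero scalars $\alpha_i$ with $\delta_i = \alpha_i e_{\sigma(i)}$ (if two of them shared a support index, the set could span at most a $(d-1)$-dimensional subspace, contradicting Assumption~\ref{assm:span}). The identical argument applied to $\Delta'$ produces a permutation $\tau$ and nonzero scalars $\beta_i$ with $\delta_i' = \beta_i e_{\tau(i)}$. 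Substituting these into $A\delta_i = \delta_i'$ yields $A_{:,\sigma(i)} = (\beta_i/\alpha_i)\, e_{\tau(i)}$, so every column of $A$ is a nonzero scalar multiple of a standard basis vector. This is exactly the statement that $A = \Pi \Lambda$ for a permutation $\Pi$ and an invertible diagonal $\Lambda$, and invertibility of $\Lambda$ follows from the invertibility of $A$ established in Proposition~\ref{theorem: additive_intervention} (equivalently, from $\beta_i/\alpha_i \neq 0$). Combining gives $\hat{z} = \Pi \Lambda z + c$ as claimed.

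I do not expect a serious obstacle here: once Proposition~\ref{theorem: additive_intervention} is in hand, the only non-trivial piece is the pigeonhole-style observation that $d$ one-sparse vectors spanning $\mathbb{R}^d$ must have all distinct supports, after which the permutation-and-scaling conclusion reads off directly from $A\delta_i = \delta_i'$. The place a careful writeup needs attention is just making clear that the correspondence $i \mapsto i$ between $\delta_i$ and the learner's guess $\delta_i'$ is fixed by the loss \eqref{eqn:intv_identity} (so $\sigma$ and $\tau$ are each well-defined permutations), and that no assumption is needed on the learner knowing either the true indices $\sigma(i)$ or magnitudes $\alpha_i$.
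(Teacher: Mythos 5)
Your proposal is correct and follows essentially the same route as the paper's proof: invoke Proposition~\ref{theorem: additive_intervention} for affine identification $\hat{z} = Az + c$, reduce the identity to $A\delta_i = \delta_i'$, and read off that each column of $A$ is a nonzero multiple of a standard basis vector, forcing $A = \Pi\Lambda$. The only cosmetic difference is that the paper takes $\sigma$ to be the identity without loss of generality (writing $\Delta = \{b_1 e_1, \ldots, b_d e_d\}$) and leaves the distinct-supports pigeonhole observation implicit, whereas you state both explicitly; this is a minor gain in rigor, not a different argument.
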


\begin{proof}
	Since Assumptions \ref{assum: dgp}, \ref{assm:span}, and \ref{assum: analytic_measure} hold, we can use  Proposition \ref{theorem: additive_intervention} to obtain that any solution to equation \eqref{eqn:intv_identity} achieves affine identification guarantees w.r.t the true latents, i.e. $\hat{z} = Az+c $, where $\hat{z} =f(x)$, $z$ is the inverse image of $x$ ($x=g(z)$), $A\in \mathbb{R}^{d\times d}$ is an inverible matrix and $c \in \mathbb{R}^d$ is the offset vector.  
	
	Define $e_i = [0, \cdots, 1_i, \cdots 0]$ as the vector, which takes a value $1$ at $i^{th}$ component and $0$ everywhere else. Without loss of generality set of true perturbations is $\Delta = \{b_1e_1, \cdots, b_de_d\}$. Note that all $b_i$'s are non-zero as the span of $\Delta$ has a dimension $d$.
	
	Denote the corresponding set of guesses from the agent are $\Delta^{'}= \{c_1e_{\pi(1)}, \cdots, c_d e_{\pi(d)}\}$, where $\pi:\{1, \cdots, d\} \rightarrow \{1, \cdots, d\}$ is a  map used by the agent to guess the coordinate impacted by the perturbation. Note that since $\Delta^{'}$ spans $d$ dimensions $\pi$ has to be a bijection $c_j$'s are non-zero as the span of $\Delta^{'}$. 
	
	Take $b_{j}e_{j} \in \Delta$ and the corresponding guess $c_k e_k$ and substitute it in the relation $\hat{z} = Az+c$ to get 
	\begin{equation}
	\begin{split}
	& \hat{z} = Az+c \\ 
	& \hat{z} + c_ke_k = A(z+b_j e_j)+c \\ 
	& c_ke_k = b_j Ae_j \\ 
	& \frac{c_k}{b_j} e_k =  Ae_j.
	\end{split}
	\end{equation}

	Since $\pi$ is a bijection for every $j$ there is a unique $k$ in the RHS. From the above equation, we gather that the $j^{th}$ column of $A$ is $  \frac{c_k}{b_j} e_k$. We apply this to all the columns and conculde that $\hat{z} = \Pi \Lambda z + c$, where $\Lambda$ is a diagonal matrix and $\Pi$ is a permutation matrix decided based on the bijection $\pi$ ($(\Pi_k = e_{\pi(k)})$, where $\Pi_k$ is the $k^{th}$ colum of the matrix). 
\end{proof}

\begin{theorem}
	\label{theorem: blockwise_intervention1}
	If Assumptions \ref{assum: dgp}-\ref{assum: analytic_measure}, \ref{assum:block_level_intervention}, \ref{assum: span_agent} hold,  then the encoder that solves  equation \eqref{eqn:intv_identity} (where $\Delta^{'}$ is $p$-sparse, $\mathsf{dim}\Big(\mathsf{span}\big(\Delta^{'}\big)\Big)  = d$) identifies true latents up to permutation and block-diagonal transforms, i.e. $f(x)=\hat{z} = \Pi\tilde{\Lambda} z + c$, where $\tilde{\Lambda} \in \mathbb{R}^{d\times d}$ is an invertible block-diagonal matrix with blocks of size $p\times p$, $\Pi \in \mathbb{R}^{d\times d}$ is a permutation matrix and $c \in \mathbb{R}^d$ is an offset.
\end{theorem}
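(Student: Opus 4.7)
My plan is to bootstrap off Proposition \ref{theorem: additive_intervention1}. Since Assumptions \ref{assum: dgp}, \ref{assm:span} and \ref{assum: analytic_measure} are in force, I first invoke that proposition to conclude that any encoder solving \eqref{eqn:intv_identity} gives $\hat z = Az + c$ with $A \in \mathbb{R}^{d\times d}$ invertible. Substituting $\tilde z_k = z + \delta_k$ into this affine identity and cancelling $Az+c$ on both sides reduces the rest of the argument to the linear relation $\delta'_k = A\delta_k$ for every $k$, i.e.\ as matrices $\Delta' = A\Delta$. The remaining work is entirely combinatorial/linear-algebraic: I need to show that $A$ has the shape $\Pi\tilde\Lambda$ for a permutation $\Pi$ and a $p\times p$ block-diagonal $\tilde\Lambda$.

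Next I would pin down the coordinate-support structure of the true and guessed perturbations. For each group $q\in\mathcal{G}_{\mathcal{I}}$, all $\delta_i$ with $i\in q$ live in the coordinate subspace $\mathbb{R}^{\mathcal{B}_q}$ of dimension $p$. Combining Assumption \ref{assm:span} (total span is $\mathbb{R}^d$) with the fact that the $\mathcal{B}_q$ partition $\{1,\dots,d\}$ (Assumption \ref{assum:block_level_intervention}), a dimension count forces the perturbations within group $q$ to span all of $\mathbb{R}^{\mathcal{B}_q}$. On the guess side, Assumption \ref{assum: span_agent} constrains all guesses in group $q$ to a single coordinate subspace $\mathbb{R}^{\mathcal{B}'_q}$ with $|\mathcal{B}'_q|\le p$; together with $p$-sparsity of $\Delta'$ and $\dim\mathrm{span}(\Delta')=d$, a matching dimension count yields $|\mathcal{B}'_q|=p$, that the guesses in group $q$ span all of $\mathbb{R}^{\mathcal{B}'_q}$, and that $\{\mathcal{B}'_q\}_q$ is also a partition of $\{1,\dots,d\}$ into blocks of size $p$.

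Now I would assemble the two observations. Because $\delta'_i = A\delta_i$ and the group-$q$ true perturbations span $\mathbb{R}^{\mathcal{B}_q}$ while their guesses span $\mathbb{R}^{\mathcal{B}'_q}$, the linear map $A$ sends $\mathbb{R}^{\mathcal{B}_q}$ into $\mathbb{R}^{\mathcal{B}'_q}$ for every $q$. Concretely, the columns of $A$ indexed by $\mathcal{B}_q$ are supported only on rows indexed by $\mathcal{B}'_q$. Since both $\{\mathcal{B}_q\}$ and $\{\mathcal{B}'_q\}$ partition $\{1,\dots,d\}$ and the mapping $q\mapsto q$ between them is a bijection of equal-size blocks, permuting rows of $A$ by the permutation $\Pi$ that realigns $\mathcal{B}'_q$ to $\mathcal{B}_q$ turns $A$ into a block-diagonal matrix $\tilde\Lambda$ with $p\times p$ blocks. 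Invertibility of $A$ forces each $p\times p$ block of $\tilde\Lambda$ to be invertible, giving $\hat z = \Pi\tilde\Lambda z + c$ as claimed.

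The main obstacle I anticipate is the two-sided dimension-counting step: carefully arguing, from $p$-sparsity plus the full-span condition on $\Delta'$ plus the group-consistency in Assumption \ref{assum: span_agent}, that the guess blocks $\mathcal{B}'_q$ themselves must form a partition of $\{1,\dots,d\}$ into blocks of size exactly $p$. Once that is established, the rest is a clean bookkeeping argument on which columns of $A$ can be nonzero in which rows, and the permutation–block-diagonal factorization drops out immediately.
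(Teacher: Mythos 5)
Your proposal is correct and follows essentially the same route as the paper's proof: invoke Proposition \ref{theorem: additive_intervention1} for affine identification $\hat z = Az + c$, reduce the identity to $\Delta' = A\Delta$, and then analyze the column supports of $A$ group by group to factor it as a permutation times an invertible $p\times p$ block-diagonal matrix. If anything, your two-sided dimension count showing that the guessed blocks $\mathcal{B}'_q$ must have size exactly $p$ and partition $\{1,\dots,d\}$ makes explicit a step the paper only asserts (``the agent selects a set of unique $p$ indices for each group''), so your write-up is slightly more complete on that point.
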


\begin{proof}
	Since Assumptions \ref{assum: dgp}, \ref{assm:span}, and \ref{assum: analytic_measure} hold, we can use  Proposition \ref{theorem: additive_intervention} to obtain that any solution to equation \eqref{eqn:intv_identity} achieves affine identification guarantees w.r.t the true latents, i.e. $\hat{z} = Az+c $, where $\hat{z} =f(x)$, $z$ is the inverse image of $x$ ($x=g(z)$), $A\in \mathbb{R}^{d\times d}$ is an inverible matrix and $c \in \mathbb{R}^d$ is the offset vector.  
	
	We start the proof by assuming that the agent knows the blocks that are impacted under each perturbation, i.e., for each $i \in \mathcal{I}$, the agent knows the block of the latents that are impacted denoted as  $\mathcal{A}_i$. We relax this assumption later. 
	
	Following Assumption \ref{assum:block_level_intervention}, we know that perturbations are $p$-sparse, blockwise and non-overlapping.  Without loss of generality, we can assume that the different groups on which perturbations in $\Delta$ act are given as $\{1, \cdots, p\}$, $\{p+1, \cdots, 2p\}$ and so on. Consider a perturbation $\delta_i$, which belongs to Group $1$ and impacts the latents $\{1, \cdots, p\}$.  For this perturbation, the agent selects $\delta_{i}^{'}$, which shares the same sparsity pattern. Therefore, that the first $p$ elements of $\delta_{i}^{'}$ and $\delta_{i}$ are both non-zero and the rest of the elements are zero. Under these assumptions, we can write the relationship between true and guessed perturbations as follows.
	
	\begin{equation}
	\begin{split}
	&     \hat{z} + \delta_{i}^{'}  = A(z + \delta_{i}) + c \\ 
	&    \delta_{i}^{'} = A \delta_{i}
	\end{split}
	\label{proof_block_eq2}
	\end{equation}
	
	Denote the first $p$ elements of row $k$ of matrix $A$ as $a_k[1:p]$ and the first $p$ elements of the vector $\delta_i$ as $\delta_i[1:p]$. For $k>p$, we use the equation \eqref{proof_block_eq2} to get $a_{k}[1:p]^{\mathsf{T}}\delta_{i}[1:p] = 0$.   
	
	For all perturbations in Group $1$, we can write the same condition, i.e., $a_{k}[1:p]^{\mathsf{T}}\delta_{i}[1:p] = 0$. Since the perturbations in Group 1  span a $p$ dimensional space (following Assumption \ref{assm:span}, \ref{assum:block_level_intervention}), we get that  $a_{k}[1:p]=0$. Therefore, $a_{k}[1:p]=0$ for all $k>p$. 
	
	Let $q$ denote the number of perturbations in Group $1$, where $q\geq p$. For all $k\leq p$ we can solve for the first $p \times p $ block using the perturbations guessed by the agent and the true perturbations in Group $1$. Denote the first $p\times p $ block of $A$ as $A[1:p, 1:p]$ and the first $p$ components of the $q$ perturbations in Group $1$ as $\Delta[1:p, 1:q]$. Similarly, the first $p$ components of the $q$ perturbations guessed by the learner is denoted as $\Delta^{'}[1:p, 1:q]$. We now need to show that the block $A[1:p, 1:p]$ is invertible. From the above equation in \eqref{proof_block_eq2}, we get 
	$$A[1:p, 1:p]\Delta^{'}[1:p, 1:q] = \Delta[1:p, 1:q], $$ 
	
	where $q$ is the number of perturbations in Group $1$. 
	
	Since rank of $\Delta[1:p, 1:q]$ and $\Delta^{'}[1:p, 1:q]$ is $p$, the rank of $A[1:p, 1:p]$ cannot be less than $p$ or else it would lead to a contradiction. This shows that $A[1:p, 1:p]$ is invertible. We derived the properties of the first $p$ columns of matrix $A$. For Group 2, we similarly obtain that $A[p+1:2p, p+1:2p]$ is an invertible matrix and rest of the elements in columns $\{p+1, \cdots, 2p\}$ are zero. Due to symmetry of the setting, we can apply the same argument to all the other blocks as well.    Therefore, we conclude that $A$ is block-diagonal and invertible. This leads to the conclusion that $\hat{z} = \tilde{\Lambda} z + c$, where $\tilde{\Lambda}\in \mathbb{R}^{d\times d}$ and $c\in \mathbb{R}^d$. 
	
	So far we assumed that the agent knows how the interventions in $\{1, \cdots, m \}$  impact the blocks  $\{\mathcal{A}_1, \cdots, \mathcal{A}_m\}$. Under Assumption \ref{assum: span_agent}, the agent knows the groups of the perturbations only. For perturbations $\{\delta_1, \cdots, \delta_p\}$ in Group $1$ that impact $\{1, \cdots, p\}$, the agent guesses $\{\delta_1^{'}, \cdots, \delta_p^{'}\}$. Note that perturbations in $\{\delta_1^{'}, \cdots, \delta_p^{'}\}$ impact the same block of length $p$ with indices given as $\{\alpha_1, \cdots, \alpha_p\}$.   Recall the first $p$ elements of row $k$ of matrix $A$  and vector $\delta_i$ are denoted as $a_k[1:p]$ and $\delta_i[1:p]$ respectively. There exist $d-p$ rows in $A$
	for which we get $a_{k}[1:p]^{\mathsf{T}}\delta_{i}[1:p] = 0$. Thus $a_{k}[1:p]=0 $ for all these rows. The first $p$ elements of remaining $p$ rows form a square matrix denoted as $A[\alpha_1: \alpha_p, 1:p]$, where $\{\alpha_1,\cdots, \alpha_p\}$ are the indices guessed by the agent for the block corresponding to Group 1.   $A[\alpha_1: \alpha_p, 1:p]$ satisfies
	
	$$A[\alpha_1 : \alpha_p, 1:p]\Delta[1:p, 1:q] = \Delta^{'}[\alpha_1:\alpha_p, 1:q],$$ 
	
	where $ \Delta^{'}[\alpha_1:\alpha_p, 1:q]$ is the matrix of non-zero components of the $q$ perturbation vectors that the agent guesses. Using the same argument as above, we can argue that $A[\alpha_1 : \alpha_p, 1:p]$ is invertible. We have derived the properties of first $p$ columns of $A$. We apply the same argument to other groups as well.  Since the agent selects a set of unique $p$ indices for each group, we obtain that the matrix $A$ can be factorized as a permutation matrix times a block diagonal matrix. The first $p$ rows of the permutation matrix with index $\{1, \dots, p\}$ have ones at locations $\{\alpha_1, \cdots, \alpha_p\}$ and so on.  As a result, we get that $\hat{z} = \Pi\tilde{\Lambda}z + c$.

	This completes the proof. 
\end{proof}

\begin{theorem}
	\label{theorem: intersecT1}
	Suppose Assumptions \ref{assum: dgp}, \ref{assum: analytic_measure}, \ref{assum: span_agent} and \ref{assum1:block_level_intervention} hold. Consider the subsets $\mathcal{I}_1$ and $\mathcal{I}_2$ that satisfy Assumption \ref{assum1:block_level_intervention}. For every pair of blocks,  $\mathcal{B}^{1} \in \mathcal{B}_{\mathcal{I}_1}$ and $\mathcal{B}^{2} \in \mathcal{B}_{\mathcal{I}_2}$, the encoder that solves equation \eqref{eqn:intv_identity} (where $\Delta^{'}$ is $p$-sparse, $\mathsf{dim}\Big(\mathsf{span}\big(\Delta^{'}\big)\Big)  = d$) identifies latents in each of the blocks $\mathcal{B}^{1} \cap \mathcal{B}^{2}$, $\mathcal{B}^{1} \setminus \mathcal{B}^{2}$, $\mathcal{B}^{2} \setminus \mathcal{B}^{1}$ up to invertible affine transforms.
\end{theorem}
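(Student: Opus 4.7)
The plan is to reduce to Theorem~\ref{theorem: blockwise_intervention1} applied twice, once to each of the two blockwise non-overlapping sub-collections $\mathcal{I}_1$ and $\mathcal{I}_2$, and then to compare the two resulting factorizations of the encoder. First I would verify that the hypotheses of Theorem~\ref{theorem: blockwise_intervention1} are inherited by each of $\mathcal{I}_1$ and $\mathcal{I}_2$: each is $p$-sparse and blockwise non-overlapping by construction (Assumption~\ref{assum1:block_level_intervention}), the per-group perturbations span a $p$-dimensional space (so the collection spans $\mathbb{R}^{d}$ since the blocks partition $\{1,\dots,d\}$), and the guesses in $\Delta^{'}$ for these perturbations share the same sparsity pattern by Assumption~\ref{assum: span_agent}. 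An encoder solving equation~\eqref{eqn:intv_identity} for all of $\mathcal{I}$ in particular solves it for the subsets $\mathcal{I}_1$ and $\mathcal{I}_2$. Hence Theorem~\ref{theorem: blockwise_intervention1} gives two affine relations $\hat z = A z + c$ with $A = \Pi_1\tilde\Lambda_1 = \Pi_2\tilde\Lambda_2$ and $c=c_1=c_2$, where $\tilde\Lambda_r$ is invertible and block-diagonal with blocks indexed by $\mathcal{B}_{\mathcal{I}_r}$ (after permutation by $\Pi_r$), for $r=1,2$.

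Next I would translate the two factorizations into support statements on the invertible matrix $A$. For $r\in\{1,2\}$ and a block $\mathcal{B}\in\mathcal{B}_{\mathcal{I}_r}$, let $R_r(\mathcal{B})$ be the set of row indices $i$ whose row $A_{i,\cdot}$ is supported in the columns indexed by $\mathcal{B}$; Theorem~\ref{theorem: blockwise_intervention1} shows that $\{R_r(\mathcal{B})\}_{\mathcal{B}\in\mathcal{B}_{\mathcal{I}_r}}$ partitions $\{1,\dots,d\}$ with $|R_r(\mathcal{B})|=|\mathcal{B}|=p$, and that the diagonal sub-block $A[R_r(\mathcal{B}),\mathcal{B}]$ is invertible. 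Fix $\mathcal{B}^{1}\in\mathcal{B}_{\mathcal{I}_1}$ and $\mathcal{B}^{2}\in\mathcal{B}_{\mathcal{I}_2}$, write $R:=R_1(\mathcal{B}^{1})\cap R_2(\mathcal{B}^{2})$ and $C:=\mathcal{B}^{1}\cap \mathcal{B}^{2}$. If $i\in R$, then its row is supported in $\mathcal{B}^{1}$ and in $\mathcal{B}^{2}$, hence in $C$; if instead $i\in R_1(\mathcal{B}^{1})\setminus R_2(\mathcal{B}^{2})$, its row is supported in $\mathcal{B}^{1}\cap \mathcal{B}_{\mathcal{I}_2}^{(l)}$ for some $l\neq l^{*}$, so it vanishes on $C$ and is supported in $\mathcal{B}^{1}\setminus \mathcal{B}^{2}$.

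The decisive step is a dimension count to promote these support statements to full identification. Restricting the invertible $p\times p$ block $A[R_1(\mathcal{B}^{1}),\mathcal{B}^{1}]$ to the column set $C$, the only rows that can be nonzero there are those in $R$, so the columns indexed by $C$, being $|C|$ linearly independent vectors supported on $|R|$ coordinates, force $|R|\ge|C|$. Summing over all $\mathcal{B}^{2}\in\mathcal{B}_{\mathcal{I}_2}$ gives $p=\sum_{l}|R_1(\mathcal{B}^{1})\cap R_2(\mathcal{B}_{\mathcal{I}_2}^{(l)})|\ge \sum_{l}|\mathcal{B}^{1}\cap \mathcal{B}_{\mathcal{I}_2}^{(l)}|=p$, so equality holds termwise and $|R|=|C|$. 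Combined with the support analysis above, the matrix $A[R_1(\mathcal{B}^{1}),\mathcal{B}^{1}]$ is block-diagonal after permuting its rows/columns into $R\cup (R_1(\mathcal{B}^{1})\setminus R)$ and $C\cup (\mathcal{B}^{1}\setminus C)$, and since the ambient block is invertible so are both diagonal pieces. Reading off the relation $\hat z_R = A[R,C]\, z_C + c_R$ yields invertible affine identification of the latents in $\mathcal{B}^{1}\cap \mathcal{B}^{2}$; the same argument applied to $R_1(\mathcal{B}^{1})\setminus R$ with column set $\mathcal{B}^{1}\setminus \mathcal{B}^{2}$ identifies $\mathcal{B}^{1}\setminus \mathcal{B}^{2}$, and swapping the roles of $\mathcal{I}_1$ and $\mathcal{I}_2$ handles $\mathcal{B}^{2}\setminus \mathcal{B}^{1}$. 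The main obstacle is precisely this dimension-count/support-propagation step; everything else is bookkeeping around applying Theorem~\ref{theorem: blockwise_intervention1}.
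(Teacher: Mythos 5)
Your proposal is correct, and its skeleton matches the paper's: apply Theorem~\ref{theorem: blockwise_intervention1} once to each of $\mathcal{I}_1$ and $\mathcal{I}_2$ (after checking the hypotheses are inherited, as you do), note that both applications constrain the \emph{same} affine map $\hat z = Az+c$ from Proposition~\ref{theorem: additive_intervention1}, and then compare the two factorizations. Where you differ is in how the comparison is executed. The paper works at the level of scalar affine identities: for a component shared between $\hat z^1$ and $\hat z^2$ it subtracts the two expressions and uses the full support of $Z$ (a nonconstant linear form cannot equal a constant) to kill the coefficients on $z^{11}$ and $z^{22}$; it separately handles components of $\hat z^1$ absent from $\hat z^2$ via the complement relation $\hat z_{\alpha_k} = d_1^{\mathsf{T}} z_c^2 + d_3$, and it also re-derives the group-label-only case (Assumption~\ref{assum: span_agent}) inside the proof. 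You instead encode everything as support constraints on the single invertible matrix $A = \Pi_1\tilde\Lambda_1 = \Pi_2\tilde\Lambda_2$ — which is legitimate shortcutting, since the statement of Theorem~\ref{theorem: blockwise_intervention1} already absorbs Assumption~\ref{assum: span_agent} into the permutation $\Pi$, so the paper's second pass is not needed in your reduction. Your route also buys one genuine improvement in rigor: the squareness of the sub-blocks ($|R| = |C|$), which the paper disposes of tersely by asserting that the block-diagonal matrices in its equations \eqref{proof_thm3_eq5}--\eqref{proof_thm3_eq6} are invertible and hence $e_2$ is $q\times q$, is established in your proof by the explicit termwise count $p=\sum_{l}\lvert R_1(\mathcal{B}^{1})\cap R_2(\mathcal{B}^{(l)})\rvert \ge \sum_{l}\lvert \mathcal{B}^{1}\cap \mathcal{B}^{(l)}\rvert = p$ combined with the per-pair inequality $|R|\ge|C|$ from linear independence of the columns of the invertible diagonal block. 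In short: same decomposition and same key lemma, but your matrix-support formulation with the dimension count is a tighter and somewhat more self-contained write-up of the step the paper leaves implicit, while the paper's subtraction argument makes the vanishing of cross-coefficients more elementary and explicit.
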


\begin{proof}
	Following Assumption \ref{assum1:block_level_intervention}, we know that there exists at least two subsets $\mathcal{I}_1$ and $\mathcal{I}_2$ that satisfy blockwise non-overlapping perturbations.  Like in the previous proof, we start this proof also with the case where the agent knows the exact sparsity pattern in the perturbations. We relax this assumption in a bit. Consider a block  $\mathcal{B}^{1} = \{\beta_1, \cdots, \beta_p\}$ impacted by the perturbations in $\mathcal{I}_1$. Since $\mathcal{I}_1$ is blockwise and non-overlapping, we can follow the analysis in the first part of the previous theorem to get $[\hat{z}_{\beta_1}, \cdots, \hat{z}_{\beta_p}]$ is an invertible affine transform of $[z_{\beta_1}, \cdots, z_{\beta_p}]$.  Hence, the latents in each of the blocks $\mathcal{B}^{1} \in \mathcal{G}_{\mathcal{I}_1}$ are identified up to an afffine transform. Similarly, each block $\mathcal{B}^{2} \in \mathcal{G}_{\mathcal{I}_2}$ is identified up to an affine transform.  Consider an element $i \in \mathcal{B}^{1}\cap\mathcal{B}^{2}$. $\hat{z}_i$ can be expressed as an affine transform of two different blocks of latents $z^1$ and $z^2$. $z^{1}$ and $z^{2}$ share some components, we denote them as $z^{12}$. The components exclusive to $z^{1}$ ($z^{2}$) are denoted as $z^{11}$ ($z^{22}$).
	
	We write this condition as follows.
	\begin{equation}
	\begin{split}
	&   \hat{z}_i = a_1^{\mathsf{T}} z^{11} + a_2 z^{12}  + a_3\\ 
	&   \hat{z}_i = b_1^{\mathsf{T}} z^{22} + b_2 z^{12}  + b_3 \\
	& a_{1}^{\mathsf{T}}z^{11} + (a_{2}-b_2)^{\mathsf{T}}z^{12} -b_{1}^{\mathsf{T}}z^{22} = b_3-a_3 
	\end{split}
	\label{proof_thm3_eq1}
	\end{equation}

	If $[a_1,a_2-b_2, b_1]$ is non-zero, i.e., at least one element is non-zero, then the range of LHS is $\mathbb{R}$. But the range of the RHS is a constant. Therefore, for the above to be true $[a_1,a_2-b_2, b_1]=0$ and that implies $a_3=b_3$. As a result, the linear entanglement is now confined to only the intersecting variables $z^{12}$. We can repeat this argument for all elements in $\mathcal{B}^{1}\cap \mathcal{B}^2$.

	In the proof above, we relied on the assumption that the components impacted by each intervention $i \in \mathcal{I}$ are known.  We now relax this assumption and work with assumption that was used in the previous theorem (Assumption \ref{assum: span_agent}), which states that the agent knows the group label of each perturbation. 
	
	Consider the latents in the block $\mathcal{B}^{1}\in \mathcal{G}_{\mathcal{I}_1}$, which we denote as $z^{1}$. We apply Theorem \ref{theorem: blockwise_intervention} to this block. Let the set of estimated latents that affine identify $\mathcal{B}^{1}$ be $\hat{z}^{1} = [\hat{z}_{\alpha_1}, \cdots, \hat{z}_{\alpha_p}]$, where $\{\alpha_1, \cdots, \alpha_p\}$ is the set of indices in $\hat{z}$. We write this as $[\hat{z}_{\alpha_1}, \cdots, \hat{z}_{\alpha_p}] =  A^{1}z^{1} + c^{1}$.  $\tilde{\mathcal{B}}^{1}$ denotes the set of remaining latents not in the block $\mathcal{B}^{1}$. We denote the latents in the block $\tilde{\mathcal{B}}^{1}$ as $z_{c}^{1}$.   Following Theorem \ref{theorem: blockwise_intervention}, we get that the remaining elements of $\hat{z}$ other than $\hat{z}^{1}$, which we denote as $\hat{z}_{c}^{1}$, affine identify the latents $z_{c}^{1}$ in the block  $\tilde{\mathcal{B}}^{1}$.

	Similarly, consider the latents in the group $\mathcal{B}^{2} \in \mathcal{G}_{\mathcal{I}_2}$ denoted as $z^{2}$. $\hat{z}^{2} = [\hat{z}_{\beta_1}, \cdots, \hat{z}_{\beta_p}]$ denotes the latents that affine identify $z^2$. $\tilde{\mathcal{B}}^{2}$ is the set of remaining latents. The remaining elements of $\hat{z}$ other than $\hat{z}^{2}$ are denoted as $\hat{z}_{c}^{2}$. $\hat{z}_{c}^{2}$ affine identifies the latents in the block  $\tilde{\mathcal{B}}^{2}$, which are denoted as $z_{c}^{2}$.

	Recall that the latents $z^{11} \in \mathcal{B}^{1}\setminus \mathcal{B}^{2}$,  $z^{12} \in \mathcal{B}^{1}\cap \mathcal{B}^{2}$, and $z^{22} \in \mathcal{B}^{2}\setminus\mathcal{B}^{1}$.  Consider a latent that is shared between $\hat{z}^{1}$ and $\hat{z}^{2}$. Using the same analysis from equation \eqref{proof_thm3_eq1}, we show that such an element puts a non-zero weight only on $z^{12}$.  Therefore, all the latents shared between  $\hat{z}^{1}$ and $\hat{z}^{2}$ have a non-zero weight on $z^{12}$. 
	Now consider a component of $\hat{z}^{1}$ denoted as $\hat{z}_{\alpha_k}$, which is not present in $\hat{z}^2$. We can write the affine identification condition as 
	
	\begin{equation}
	\hat{z}_{\alpha_k} = c_1^{\mathsf{T}} z^{11} + c_2^{\mathsf{T}} z^{12}  + c_3.
	\label{proof_thm3_eq2}
	\end{equation}
	
	We selected $\hat{z}_{\alpha_k}$, which is not present in $\hat{z}^2$. Since $\hat{z}_{\alpha_k}$ is in $\hat{z}_{c}^2$, we have
	
	\begin{equation}
	\hat{z}_{\alpha_k} = d_1^{\mathsf{T}} z_{c}^{2} + d_3.
	\label{proof_thm3_eq3}
	\end{equation}
	
	If we take a difference of the above two equations \eqref{proof_thm3_eq2} and \eqref{proof_thm3_eq3}, we get that $c_2$ is equal to zero (see the justification below). 
	
	\begin{equation}
	d_1^{\mathsf{T}} z_{c}^{2} + d_3 - c_1^{\mathsf{T}} z^{11} - c_2^{\mathsf{T}} z^{12} -c_3 =0
	\label{proof_thm3_eq4}
	\end{equation}
	
	Note that there is no term associated with $z^{12}$ in equation \eqref{proof_thm3_eq3} as $z_{c}^{2}$ is the set of elements not in $z^2$. Now since the above equation \eqref{proof_thm3_eq3} holds for all $z$, we get $c_2=0$.
	
	From the above analysis we conclude that the latents in $\hat{z}^1$ can be divided into two parts i) the latents that are shared with $\hat{z}^2$; these latents are an affine transform of $z^{12}$, ii) the latents that are not shared with $\hat{z}^2$; these latents are an affine transform of $z^{11}$. We write this condition as 
	\begin{equation} 
	\begin{split}
	\hat{z}^1 = \begin{bmatrix}
	e_1 & 0\\
	0   & e_2
	\end{bmatrix} \begin{bmatrix}
	z^{11} \\
	z^{12}
	\end{bmatrix} + e_3.
	\end{split}
	\label{proof_thm3_eq5}
	\end{equation}
	
	Similarly, we get 
	\begin{equation} 
	\begin{split}
	\hat{z}^2 = \begin{bmatrix}
	f_1 & 0\\
	0   & f_2
	\end{bmatrix} \begin{bmatrix}
	z^{22} \\
	z^{12}
	\end{bmatrix} + f_3.
	\end{split}
	\label{proof_thm3_eq6}
	\end{equation}
	
	We have already discussed above that $f_2=e_2$ and the latter half of $f_3$ corresponding to $z^{12}$ is equals corresponding half of $e_3$.
	
	From the previous theorem, we know that the matrices in the above equations \eqref{proof_thm3_eq5} and \eqref{proof_thm3_eq6} are invertible. Thus if $z^{12}$ has $q$ components, then $e_2$ is $q\times q$ matrix and $e_1$ is $p-q \times p-q$ matrix. This establishes the affine identification of the smaller blocks obtained by intersection of the blocks across two sets of non-overlapping blockwise perturbations. 
	This completes the proof.  
\end{proof}

\begin{theorem}
	\label{cor:block_level_intervention1}
	Suppose Assumptions \ref{assum: dgp}, \ref{assum: analytic_measure}, \ref{assum: span_agent} and \ref{assum:block_level2} hold, then the encoder that solves the identity in equation \eqref{eqn:intv_identity} (where $\Delta^{'}$ is $p$-sparse, $\mathsf{dim}\Big(\mathsf{span}\big(\Delta^{'}\big)\Big)  = d$) identifies true latents up to permutations and scaling, i.e., $\hat{z} = \Pi \Lambda z +c$, where $\Pi \in \mathbb{R}^{d\times d}$ matrix and $\Lambda \in \mathbb{R}^{d\times d}$ is a diagonal matrix.
\end{theorem}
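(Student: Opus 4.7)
The strategy is to reduce Theorem 5 to repeated applications of Theorem 4. Assumption 8 gives us $d$ contiguous blocks of length $p$, which I will write cyclically as $B_j = \{j, j{+}1, \ldots, j{+}p{-}1\}$ (indices mod $d$) for $j \in \{1,\ldots,d\}$. Because $d \bmod p = 0$, for each offset $s \in \{1,\ldots,p\}$ the collection $\mathcal{P}_s = \{B_s, B_{s+p}, B_{s+2p}, \ldots, B_{s+d-p}\}$ partitions $\{1,\ldots,d\}$ into $d/p$ disjoint blocks of size $p$. Thus the perturbations indexed by each $\mathcal{P}_s$ form a blockwise non-overlapping subset in the sense of Definition 1, so Assumption 7 of Theorem 4 is satisfied for every pair $(\mathcal{P}_s, \mathcal{P}_{s'})$.

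\textbf{Singleton intersections.} The key combinatorial observation is that for any $s$, the partitions $\mathcal{P}_s$ and $\mathcal{P}_{s+p-1}$ (with $s{+}p{-}1$ reduced mod $p$) produce exactly singleton intersections: $B_{s+kp} \cap B_{s+(k+1)p-1} = \{s+(k+1)p-1\}$ for $k = 0, 1, \ldots, d/p - 1$, because two contiguous blocks of length $p$ starting $p-1$ positions apart overlap in a single index. As $s$ ranges over $\{1,\ldots,p\}$ and $k$ over $\{0,\ldots,d/p-1\}$, the resulting singletons $\{s+(k+1)p-1 \bmod d\}$ cover every index in $\{1,\ldots,d\}$ exactly once.

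\textbf{Invoking Theorem 4.} For each latent index $i$, pick the pair $(s,k)$ yielding the singleton $\{i\}$, and apply Theorem 4 to $(\mathcal{B}^1,\mathcal{B}^2) = (B_{s+kp}, B_{s+(k+1)p-1})$. Its conclusion gives an invertible affine relation on $\mathcal{B}^1 \cap \mathcal{B}^2 = \{i\}$, which for a single coordinate means there is some index $\sigma(i)$, a nonzero scalar $\lambda_i$ and an offset $c_i$ such that $\hat{z}_{\sigma(i)} = \lambda_i z_i + c_i$. Doing this for every $i$ yields a map $\sigma:\{1,\ldots,d\}\to\{1,\ldots,d\}$ and diagonal scaling. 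To see that $\sigma$ is a permutation, I will invoke Proposition 1 (affine identification): since $\hat{z} = A z + c$ with $A$ invertible, every coordinate of $\hat{z}$ can depend on some $z_i$, so $\sigma$ is onto and hence a bijection. Assembling, $\hat{z} = \Pi\Lambda z + c$ as claimed.

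\textbf{Main obstacle.} The nontrivial content is entirely in the combinatorial choice of shifts that produces covering singleton intersections; the analytic work is already packaged inside Theorem 4. Two bookkeeping points to be careful about: (i) using cyclic indexing consistently so that $\mathcal{P}_s$ really is a partition for every $s \in \{1,\ldots,p\}$ (this uses $d \bmod p = 0$), and (ii) checking that the $d$ singleton identifications produced across different pairs $(s,k)$ are mutually consistent, i.e. give a well-defined bijection $\sigma$; this follows because the affine identification from Proposition 1 is fixed globally and each Theorem 4 application merely restricts blocks of that single affine map.
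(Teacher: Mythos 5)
Your proof is correct and follows essentially the same route as the paper: the paper likewise forms, for each component $k$, a partition of cyclic contiguous blocks starting at $k$ and one ending at $k$ (your $\mathcal{P}_s$ paired with $\mathcal{P}_{s+p-1}$, identical up to reindexing), and applies Theorem \ref{theorem: intersecT} to the singleton intersection $\{k\}$. If anything, you are more careful than the paper's terse proof, since you explicitly verify that the singleton identifications cover each index exactly once and assemble into a single bijection $\sigma$ via the globally fixed affine map from Proposition \ref{theorem: additive_intervention} (noting that injectivity of $\sigma$ is the cleanest route: two indices with $\sigma(i)=\sigma(i')$ would force one row of the invertible $A$ to be proportional to both $e_i$ and $e_{i'}$, a contradiction).
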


\begin{proof}
	In the above theorem, we use a set of perturbations $\mathcal{I}$ that are $p$-sparse and satisfy the following property. The first $d-(p-1)$ blocks are $\{i, \cdots, i+p-1\}$ from $i=1$ to $i=d-p+1$. The remaining $p-1$ blocks are $\{i, \cdots, (i+p-1) \;\mathsf{mod}\; (d+1)+1 \}$ from $i=d-p+2$ to $d$.  In the $d$ blocks each latent component $i$ is the first element of the block exactly once and also the last component exactly once. 
	
	Construct a partition of perturbations $\mathcal{I}_1$ with continguous blocks $\{k, \cdots, k+p-1\}$ and so on.  Similarly, construct a partition of perturbations $\mathcal{I}_2$ $\{k-(p-1), \cdots, k\}$ and so on. Note that $k$ is the first element of its block in $\mathcal{I}_1$ and it is the last element of its block in $\mathcal{I}_2$. We can apply the Theorem \ref{theorem: intersecT} to conclude that $k^{th}$ component is identified up to scaling and permutation error. We can state the same for all the components. This completes the proof. 
\end{proof}


\subsection{Extensions}
\label{sec: extension}

\subsubsection{Extending Theorem \ref{theorem: indiv_level_intervention}}
In Theorem \ref{theorem: indiv_level_intervention}, we assumed that the number of perturbations $m$ is equal to the number of latent dimensions $d$. Suppose the number of latents is larger than $d$. We sub-sample $d$ distinct perturbation indices from $\{1, \cdots, m\}$. We  solve the identity with the data generated under sub-sampled perturbations in equation \eqref{eqn:intv_identity} with one-sparse guesses. If a solution exists, then we can continue to use the analysis in Theorem \ref{theorem: indiv_level_intervention}.  If a solution does not exist, we sub-sample again and 
solve the identity in equation \eqref{eqn:intv_identity} until we find a solution. 

In Theorem \ref{theorem: indiv_level_intervention}, we assumed that the learner knows that the $\Delta$ in the DGP in equation \eqref{eqn_perturb_obs} is one sparse. Suppose the learner instead guesses that the perturbations are $p$-sparse, where $1<p<d$ and $d\; \mathsf{mod}\; p=0$. In this case, we can use analysis similar to Theorem \ref{theorem: blockwise_intervention} and guarantee blockwise identification, where the blocks are of size $p\times p$.

\subsubsection{Extending Theorem \ref{theorem: blockwise_intervention}}
In this section, we discuss how we can relax Assumption \ref{assum: span_agent}. We first show how to extend Theorem \ref{theorem: blockwise_intervention} to this setting.  In this section, we propose a sparsity test, which would be used to  test if the encoder learned is $p$-sparse or not.  For the rest of the section, we work with blockwise and non-overlapping interventions. We assume that $d \; \mathsf{mod}\; p =0$. Hence, the number of non-overlapping blocks is $r=\frac{d}{p}$.

We take each sample point $(x, \tilde{x}_1, \cdots, \tilde{x}_m)$ and divide it into two parts. We keep the first $d$ perturbations in one set $(\tilde{x}_1, \cdots, \tilde{x}_d)$ to train the encoder and we use the remaining $(\tilde{x}_{d+1}, \cdots, \tilde{x}_m)$ for checking sparsity. We refer to the first $d$ perturbations as training perturbations and the remaining perturbations as validation perturbations.

\begin{assumption}
	\label{assum: delta_ext}
	$\{\delta_i\}_{i=1}^{d}$ is the set of training perturbations, which are $p$-sparse, blockwise and non-overlapping.  $\{\delta_i\}_{i=d+1}^{m}$ is the set of validation perturbations, which are $p$-sparse, blockwise and non-overlapping. The training perturbations span $\mathbb{R}^d$.
\end{assumption}

Consider $d$ perturbations and represent them as follows $\Delta_d$
\begin{equation}
\Delta_d=     \begin{bmatrix}
\Delta_{11} & \Delta_{12} & \cdots, \Delta_{1r}\\ 
\Delta_{12} & \Delta_{22} & \cdots, \Delta_{2r}\\ 
& \;\;\vdots \\ 
\Delta_{r1} & \Delta_{22} & \cdots, \Delta_{rr}\\ 
\end{bmatrix} 
\end{equation}

where $\Delta_{ij}$ is $p\times p$ matrix. Without loss of generality under Assumption \ref{assum: delta_ext}, we can write $\Delta_d$ as a blockdiagonal matrix such that all matrices $\Delta_{ij}=0$ for all $i\not=j$. 

We write the inverse of $\Delta_d$ as

\begin{equation}
\Delta_d^{-1}=     \begin{bmatrix}
\tilde{\Delta}_{11} & \tilde{\Delta}_{12} & \cdots, \tilde{\Delta}_{1r}\\ 
\tilde{\Delta}_{12} & \tilde{\Delta}_{22} & \cdots, \tilde{\Delta}_{2r}\\ 
& \;\;\vdots \\ 
\tilde{\Delta}_{r1} & \tilde{\Delta}_{22} & \cdots, \tilde{\Delta}_{rr}\\ 
\end{bmatrix} 
\end{equation}

\begin{assumption}
	\label{assum: delta_ext1}
	Each element in the matrix along the diagonal of  $\Delta_d^{-1}$ is non-zero, i.e., $\forall k \in \{1, \cdots, r\}, \forall i, j \in \{1, \cdots, p\}, \tilde{\Delta}_{kk}[i,j]\not=0$
\end{assumption}

\begin{assumption}
	\label{assum: deltap_ext}
	The set of interventions guessed by the learner $\Delta^{'}$ contains $d$ perturbations, which are $p$-sparse, blockwise and non-overlapping.
\end{assumption}

We write the $d$ corresponding perturbations that the agent guesses in the form of a matrix as 
\begin{equation}
\Delta_d^{'} =     \begin{bmatrix}
\Delta_{11}^{'} & \Delta_{12}^{'} & \cdots, \Delta_{1r}^{'}\\ 
\Delta_{12}^{'} & \Delta_{22}^{'} & \cdots, \Delta_{2r}^{'}\\ 
& \;\;\vdots \\ 
\Delta_{r1}^{'} & \Delta_{22}^{'} & \cdots, \Delta_{rr}^{'}\\ 
\end{bmatrix} 
\end{equation}

where $\Delta_{ij}^{'}$ is a $p\times p$ matrix.

Define an indicator mask underlying matrix $\Delta^{'}$; it takes a value one wherever there is a non-zero entry and zero otherwise. Define the set of all the masks for $\Delta^{'}$ that satisfy the above assumption (Assumption \ref{assum: deltap_ext}) as $\mathcal{M}=\{1, \cdots, n_{\mathsf{masks}}\}$. Now under the Assumption  \ref{assum: delta_ext}, we get that the validation perturbations are blockwise and non-overlapping as well (though they are not required to span the blocks). We now formalize a simple iterative procedure in which the learner searches over masks that are compliant with the assumption above (Assumption \ref{assum: delta_ext}). The procedure relies on a sparsity test that we describe next.

In the sparsity test, we take a trained encoder and check if for each of the perturbations in the validation set, at most $p$ components change. If for any perturbation more than $p$ estimated components change, then the encoder fails the test. 

\textbf{Joint mask search and encoder learning}
\begin{itemize}
	\item  Select candidate mask $i$ from $\mathcal{M}$. Fill the non-zero entries with random values from some distribution $\mathbb{P}_M$ (we assume that $\mathbb{P}_M$ has a continuous probability density function) to generate a candidate $\Delta^{'}$ 
	\item Solve the identity in equation \eqref{eqn:intv_identity} using samples from the perturbations selected in the step above  $\Delta^{'}$. Check for $p$-sparsity on the set of validation perturbations. If the solution is at most $p$-sparse on all the validation perturbations, then select the encoder. If the solution fails, then $i=i+1$ and go to step one.
\end{itemize}

The mask search procedure described above requires brute force search over many masks. Even though the procedure is computationally intractable, we use it to  demonstrate (see Theorem \ref{theorem: intersecT1} below) that knowledge of sparsity can suffice.

\begin{theorem}
	\label{theorem: intersecT1}
	Suppose Assumptions \ref{assum: dgp}, \ref{assum: analytic_measure}, \ref{assum: delta_ext}, \ref{assum: delta_ext1}, and \ref{assum: deltap_ext} hold, then an encoder that is output learned following the joint mask search and encoder learning procedure above identifies latents up to permutation and block-diagonal transforms with probability one. 
\end{theorem}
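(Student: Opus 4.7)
The plan is to exploit the linear structure of the affine lifting from Proposition \ref{theorem: additive_intervention} and then use the randomness of $\Delta'$ to make the sparsity test decide the combinatorial structure of the mask. First I would apply Proposition \ref{theorem: additive_intervention} (whose hypotheses almost surely hold, because a random $\Delta'$ with block-sparse mask and continuous $\mathbb{P}_M$ spans $\mathbb{R}^d$ with probability one) to conclude that any encoder solving \eqref{eqn:intv_identity} gives $\hat z = Az + c$ with $A$ invertible. The identity $A\delta_k = \delta'_k$ over the $d$ training perturbations yields $A = \Delta'_d\,\Delta_d^{-1}$. After relabeling perturbations, $\Delta_d$ is block-diagonal on the true partition $\{B_1,\dots,B_r\}$, and by Assumption~\ref{assum: delta_ext1} the diagonal blocks of $\Delta_d^{-1}$ are entrywise non-zero.

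The analytical heart of the proof is to characterise, for each candidate mask, the sparsity pattern of the random matrix $A$. Letting $G_j$ denote the set of perturbations the mask assigns to guessed block $B'_j$, a direct expansion gives, for $i \in B'_j$ and $j' \in B_l$,
\begin{equation}
A_{ij'} \;=\; \sum_{k \in G_j \cap B_l} M_{i,k}\,(\Delta_d^{-1})_{k,j'}.
\end{equation}
When $G_j \cap B_l = \emptyset$ this sum is identically zero; otherwise the coefficients $(\Delta_d^{-1})_{k,j'}$ are all non-zero (Assumption~\ref{assum: delta_ext1}), so the sum is a non-trivial linear form in continuous random variables and vanishes only on a Lebesgue-null set. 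A union bound over the finitely many $(i,j')$ pairs and over the finite mask set $\mathcal{M}$ shows that almost surely, for every mask, $A_{ij'}\neq 0$ iff $G_j \cap B_l \neq \emptyset$.

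Propagating this to $(A\delta^{(v)})_i = \sum_{j'\in B_l} A_{ij'}\delta^{(v)}_{j'}$, the same genericity argument shows that almost surely the number of non-zero entries in $A\delta^{(v)}$ equals $p\cdot\bigl|\{j : G_j \cap B_l \neq \emptyset\}\bigr|$. Consequently, the validation test passes on every block $B_l$ iff exactly one guessed group intersects each true block, i.e.\ iff the mask's partition of perturbations into groups matches the true partition (the component labels attached to each group may still be permuted). For such a ``correct-grouping'' mask, $A$ is almost surely a row-permuted block-diagonal matrix with $p\times p$ blocks, the test passes, and since $\mathcal{M}$ exhausts all masks compatible with Assumption~\ref{assum: deltap_ext}, at least one such mask is reached by the enumeration.

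Once a correct-grouping mask is selected, its group labels fulfil Assumption~\ref{assum: span_agent}, and Theorem~\ref{theorem: blockwise_intervention} applies verbatim to give $\hat z = \Pi\tilde\Lambda z + c$, as desired. The delicate step I expect to be the main obstacle is guaranteeing that the validation-side coefficients $(\Delta_d^{-1}\delta^{(v)})_k$ do not all vanish on the relevant index set $G_j\cap B_l$ for some adversarial pairing of mask and validation perturbation; overcoming this cleanly probably needs either a mild genericity condition on the validation perturbations (analogous to Assumption~\ref{assum: delta_ext1}) or the implicit assumption that there are enough validation perturbations per true block to span it.
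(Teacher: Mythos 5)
Your proposal is correct and follows essentially the same route as the paper's own proof: both reduce to $A=\Delta_d^{'}\Delta_d^{-1}$, use Assumption \ref{assum: delta_ext1} together with the continuous random fill of the mask to show that $A_{ij'}$ is almost surely non-zero exactly when the guessed group containing $i$ intersects the true group containing $j'$, and then use the validation sparsity test to force each column block of $A$ to carry a single $p$-row support, yielding a permutation times block-diagonal $A$ (the paper concludes directly from this structure rather than re-invoking Theorem \ref{theorem: blockwise_intervention}, an immaterial difference). The obstacle you flag at the end is genuine and is in fact silently glossed over in the paper's proof: a row of $A$ applied to a validation perturbation $\delta^{(v)}$ equals the linear form $\sum_{k\in G_j\cap B_l} M_{ik}\,\big(\Delta_d^{-1}\delta^{(v)}\big)_k$ in the random mask values, whose coefficients are deterministic and can all vanish --- e.g.\ if $\delta^{(v)}$ is a multiple of a single training perturbation, a wrong mask that separates that perturbation from the rest of its true group still passes the test --- so the paper's claim that a vanishing output row ``is a zero probability event (since the non-zero elements of $A$ matrix are all continuous random variables)'' implicitly requires exactly the genericity or per-block spanning condition on the validation perturbations that you identify.
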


\begin{proof} 
	
	We take the encoder $f(x)$ learned from joint mask search and encoder learning procedure described above. Following Assumptions  \ref{assum: dgp}, \ref{assum: analytic_measure}, \ref{assum: delta_ext} and \ref{assum: deltap_ext}, we obtain that $f(x) = \hat{z} = Az +c$, where $x=g(z)$, $A$ is an invertible matrix and $c$ is an offset.  Following the analysis in Proposition \ref{theorem: additive_intervention}, we obtain $A$ matrix is given as $A = \Delta_d^{'}\Delta_d^{-1}$ (substitue $\hat{z} = Az+c$, $\hat{z}+\Delta_d = Az +c + A\Delta_d^{'} $). We index the matrix in terms of the blocks. 
	
	The matrix at location $(i,j)$ is $A_{ij} = \Delta_{ij}^{'}\tilde{\Delta}_{jj}$ (since $\Delta$ is a blockdiagonal matrix, i.e., $\Delta_{ij}=0$ for $i\not=j$ but $\Delta_{ii}\not=0$).  Each column of $\Delta^{'}$ consists of $p$ non-zero entries. Using this and $A_{ij} = \Delta_{ij}^{'}\tilde{\Delta}_{jj}$ we argue next that the number of non-zero entries in each column of $A$ are at least $p$.  We write $A_{ij}[k,q] = \sum_{l}\Delta_{ij}^{'}[k,l]\tilde{\Delta}_{jj}[l,q]$. Since $\Delta_{ij}^{'}[k,l]$ and $\tilde{\Delta}_{jj}[l,q]$ both take non-zero value, the first term in the above summation is non-zero. Since the other terms depend on random variables drawn independently, the probability that the sum equals zero is zero. Therefore, for each of the $p$ indices $k$  where the mask is non-zero, the   $A_{ij}[k,q]$ is non-zero.

	Suppose at least one column block  of $A$, say $jp+1:(j+1)p$,  contains two columns which exhibit a different sparsity pattern.   Since there are at least two columns which share a different sparsity pattern, there is at least one row where only one of them is zero and other is non-zero. 
	Therefore, in this column block we have at least $p+1$ rows which have at least one non-zero element. The encoder passed the sparsity test, i.e., for all the perturbations on blocks of the form  $jp+1:(j+1)p$ we have at most $p$-sparse output. Therefore, at least one of the $p+1$ rows has to multiply with the block and output a zero, which is a zero probability event (since the non-zero elements of $A$ matrix are all continuous random variables).  Thus if any contiguous block has different sparsity pattern across columns, then the encoder is selected with probablity zero. Thus from this we can conclude that for a selected encoder, each column block exhibits a sparsity pattern that is same across all the columns in the block. To ensure that $A$ is an invertible, all blocks exhibit a non-overlapping sparsity pattern.  
	Therefore, $A$ is permutation times a diagonal matrix. We now illustrate what choices of $\Delta^{'}$ lead to an $A$ that passes the sparsity test.   If for every $i$ there exists a unique $j$ for which $\Delta_{ij}^{'}$ is invertible and every other value of $j$, $\Delta_{ij}^{'}=0$, then $A$ is permutation times a diagonal matrix. This completes the proof.

\end{proof}


In this section, we showed that we we do not need to make Assumption \ref{assum: span_agent} and the knowledge of sparsity suffices to do blockwise identification. Following similar analysis as above, we can extend Theorem \ref{cor:block_level_intervention} as well. 

\subsubsection{Connection with causal interventions} In the DGP in equation \eqref{eqn_perturb_obs}, we assumed that $Z$ is sampled from any distribution $\mathbb{P}_Z$. We now consider a special case, where $Z = [Z_1, \cdots, Z_d]$ follows a certain structural causal model $\mathcal{S}$ given as 

\begin{equation}
Z_{i} \leftarrow f_{i}(\mathsf{Pa}(Z_i), U_i), \forall i \in \{1,\cdots, d\}  
\label{eqn_scm1}
\end{equation}

where $Z_i$ is generated from its parent variables denoted by $\mathsf{Pa}(Z_i)$ using the mechanism $f_i: \Pi_{\mathsf{Pa}(Z_i)}\mathcal{Z}_i \times \mathcal{U}_i \rightarrow \mathbb{R}$, which also takes the noise variable $U_i$ as input. The support of $Z_i$ is denoted by $\mathcal{Z}_i$  and that of $U_i$ is denoted by $\mathcal{U}_i$.   Suppose we perturb $Z_k$.  Under this perturbation all the latent variables for which $Z_k$ is an ancestor are going to be also affected, while keeping the rest of the variables unchanged.

Post the perturbation, the immediate children of $Z_k$ are affected and then their children and so on.  Therefore, it is reasonable to assume that we first observe the impact of perturbation on $Z_k$ itself and eventually observe the impact on child variables.   Consider a sample point $[(z_1, \cdots z_d ), (x_1, \cdots, x_n)]$ generated by equation \eqref{eqn1}.  The different observations under perturbations are  

\begin{itemize}
	\item \textbf{Pre perturbation:} $[(z_1, \cdots z_k, \cdots, z_d ), (x_1, \cdots, x_n)]$
	\item \textbf{At the time of perturbation:}  $[(z_1, \cdots z_k+\delta, \cdots, z_d ), (x_1^{'}, \cdots, x_n^{'})]$
	\item \textbf{Sufficiently long after the perturbation:}  $[(z_1, \cdots z_k+\delta, \cdots, z_d^{''} ), (x_1^{''}, \cdots, x_n^{''})]$
\end{itemize}

In the above, the latent of the sample pre perturbation and at the time of perturbation only differ in the perturbed components. However, when sufficient period has passed, other latent variables that are on the downstream path from $Z_k$ also change.   In this work, we only deal with original samples and the samples at the time of perturbation. In causal interventions, we assume access to samples before perturbation and those generated sufficiently long after the perturbation.

\newpage 
\subsection{Supplementary materials for experiments}
\label{sec : exp_supp}
\paragraph{Loss function, architecture, and other hyperparameters}
In all the experiments we optimized equation \eqref{eqn:mse} with square error loss. The encoder $f$ was an MLP with two hidden layers of size $100$ for the low-dimensional synthetic experiments and a ResNet-18 \citep{He2015} for the image-based experiments. We used the Adam optimizer \citep{adam} with a learning rate of $0.005$ with batches of $10000$ examples for $2000$ epochs for the low-dimensional synthetic experiments; the image-based models were trained online with a learning rate of $1e-4$ and a batch size of 100. 

\paragraph{Evaluation metrics} Blockwise MCC (BMCC) is a natural extension of MCC; instead of measuring correlations between true and estimated latents.
We compute the $R^2$ score between every pair of blocks impacted under true perturbation and the guessed perturbation. We find the optimal matching between pairs of blocks to maximize the average $R^2$ score between the matched blocks. We report the $R^2$ score under the optimal matching in Table \ref{table_synth}. 

\paragraph{Supplementary figures} In Figure \ref{fig:supp_1}, we plot the predicted latents against the true latent value for two of the ten latent dimensions (the two dimensions that we plot are randomly selected) when we perturb one component at a time (setting corresponds to the paragraph on non-overlapping perturbations in Section \ref{sec: expmts}). The plot shows a linear relationship between the true and the predicted latent; note that there are different slope and intercept for the different latents. The slope depends on the ratio between the change in the true latents and the predicted latent. In Figure \ref{fig:supp_2}, we plot the predicted latents against the true latent value for two of the ten latent dimensions (the two dimensions that we plot are randomly selected) when we perturb a block of two components at a time and the blocks overlap (setting corresponds to the paragraph on overlapping perturbations in Section \ref{sec: expmts}). In Figure \ref{fig:supp_3}, we show a full set of images for the experiment shown in Figure \ref{fig: ball_env}. 
\begin{figure}
	\centering
	\includegraphics{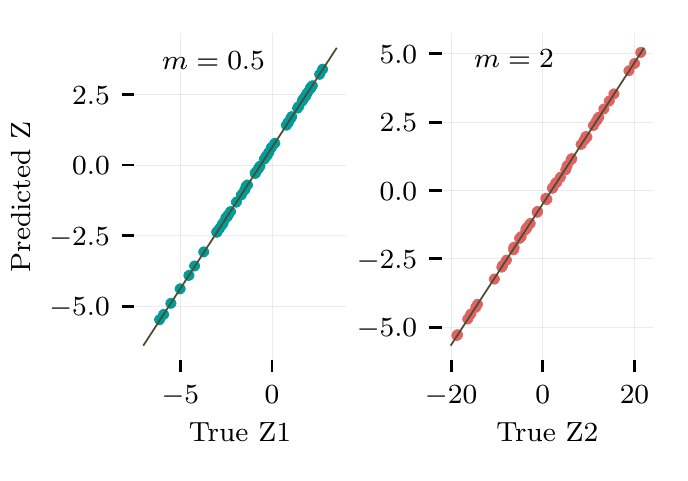}
	\caption{Regression of predicted latent values against true latent values for componentwise perturbations ($d=10$).}
	\label{fig:supp_1}
\end{figure}

\begin{figure}
	\centering
	\includegraphics{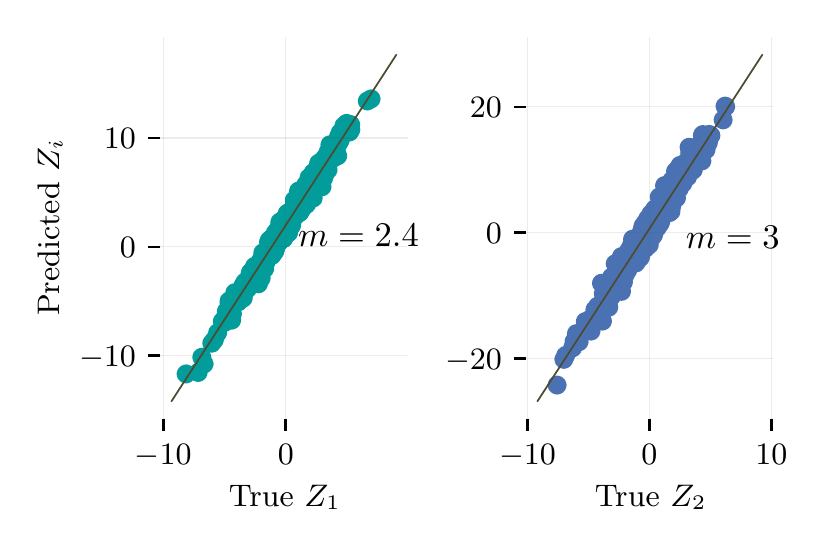}
	\caption{Regression of predicted latent values against true latent values for overlapping perturbations ($d=10$).}
	\label{fig:supp_2}
\end{figure}

\begin{figure}
	\centering
	\includegraphics[width=\textwidth]{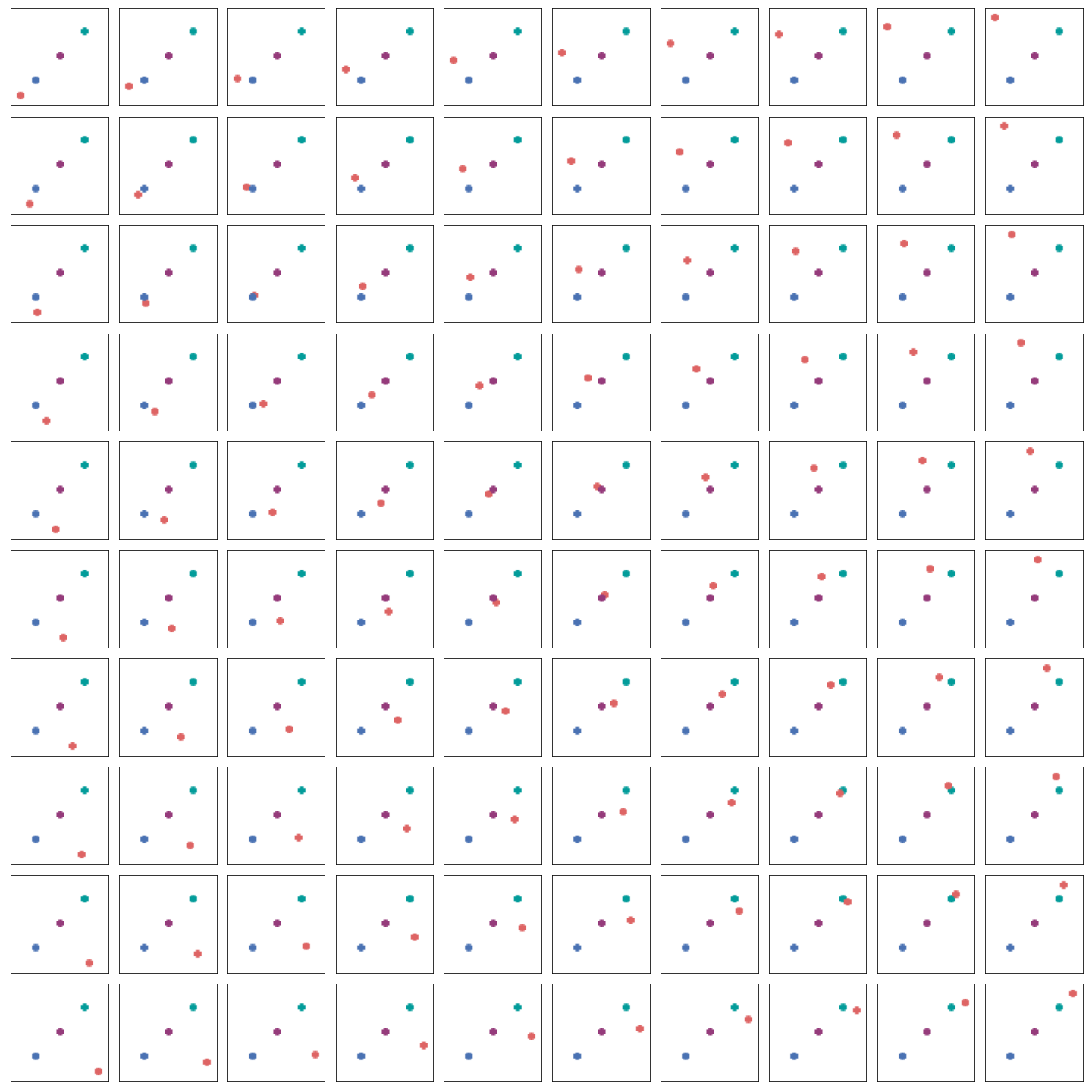}
	\caption{Full set of images for the experiment shown in Figure \ref{fig: ball_env} used to render the supplementary animation. The three balls on the diagonal are stationary throughout and the fourth ball is moved across a $10 \times 10$ grid; we get the associated network predictions and animate them to show the predicted movement of the stationary balls in the attached animation.}
	\label{fig:supp_3}
\end{figure}

\newpage

\subsubsection{Stationary point}

Recall our learning objective is to minimize the objective given in Equation \ref{eqn:mse}. We use a deep network, $\tilde{f}(\cdot;\theta)$ parameterized by
$\theta$ as our encoder and we can rewrite Equation \ref{eqn:mse} as a loss function that depends on our choice of $\theta$ and $\Delta'$ (the learner's guess for the offsets),

\begin{equation}
\mathcal{L}(\theta, \Delta^{'}) = \mathbb{E} \Big[ \Big\| f(\tilde{x}_k; \theta) - f(x; \theta)  - \delta_k^{'}\Big\|^2 \Big] = \mathbb{E} \Big[ \sum_{j}\Big( f_j(\tilde{x}_k; \theta) - f(x; \theta)  - \delta_k^{'}\Big)^2 \Big]
\end{equation}

We take the partial derivative of the loss with respect to one of the parameters $\theta_i$ and obtain 

\[
\frac{\partial \mathcal{L}(\theta)}{\partial \theta_{i}}  = \mathbb{E}_{x, \tilde{x}}\left[\sum_{j}\underbrace{(f_j(\tilde{x}; \theta) - f_j(x; \theta)  - \delta_j^{'})}_{=:e_j(x, \tilde{x}_k, \theta)}\underbrace{(\frac{\partial f_j(\tilde{x})}{\partial \theta_{i}}- \frac{\partial f_j(x)}{\partial \theta_{i}})}_{=:\phi_j(x,\tilde{x}_k, \theta_i)}\right]
\]

Suppose we learn a function $\tilde{f}$ for which $e_j(x, \tilde{x}_k, \theta)$ is independent of $x$ and $\tilde{x}$ and we denote it as $e_j(\theta)$ for all $j \in \{1, \cdots, d\}$. Under this assumption, we simplify the above expression as follows. 

\[
\frac{\partial \mathcal{L}(\theta)}{\partial \theta_{i}}  = \sum_{j}e_{j}(\theta)\mathbb{E}_{x, \tilde{x}}\Big[\phi_j(x,\tilde{x}_k, \theta_i)\Big] = \sum_{j} e_{j}(\theta) \mu_{j}(\theta)
\]
where $\mu_{j}(\theta) = \mathbb{E}_{x, \tilde{x}}\Big[\phi_j(x,\tilde{x}_k, \theta_i)\Big]$. $\mu_{j}(\theta)$ measures the expected difference in the guessed perturbation for the component $j$ when parameter $\theta_i$ of the neural network is changed. If the impact of change in the parameter is similar on average across all the components, i.e.,  $\mu_j(\theta) = \mu_{k}(\theta) = \mu(\theta)$ for all $j \not=k$, then 

\[
\frac{\partial \mathcal{L}(\theta_i)}{\partial \theta_{i}}  = \sum_{j}e_{j}(\theta)\mathbb{E}_{x, \tilde{x}}\Big[\phi_j(x,\tilde{x}_k, \theta_i)\Big] = \mu(\theta) \sum_{j} e_{j}(\theta_i) 
\]

Under these conditions, this is a stationary point if $\sum_{j} e_{j}(\theta_i)=0$ for all $\theta_i$. Empirically we observe that if $j$ is perturbed by $c$, then $e_{j}(\theta)= \frac{c}{2} $ and other components $k \not =j$, $e_{j}(\theta)= \frac{-c}{2(n_{\mathsf{balls}}-1)}$. If we substitute this in the equation above, we find that the partial derivative is zero. Since this holds for all the components $\theta_i$, we can conclude that the point observed empirically is a stationary point. Under the assumption that $e_j(x, \tilde{x}_k, \theta)$ is independent of $e_j(x, \tilde{x}_k, \theta)$, we can follow the analysis presented in proof of Theorem \ref{theorem: indiv_level_intervention}, we get $\hat{z} = Az +c$. If $z$ changes by $[c, 0, \cdots, 0]$, then $\hat{z} = [\frac{c}{2}, -\frac{c}{2(n_{\mathsf{balls}}-1)}, \cdots, -\frac{c}{2(n_{\mathsf{balls}}-1)}]$. We use this to obtain $A[i,j] = \frac{-1}{2(n_{\mathsf{balls}}-1)}$, where $i\not=j$ and $A[i,i] = \frac{1}{2}$. If $n_{\mathsf{balls}}=\infty$, then $A$ is a diagonal matrix, which implies that the MCC is one. In the discussion above, we assumed that the learner knows the component that changes. If the learner does not know the component that changes, then that introduces permutation errors as well.

\end{document}